\newcolumntype{s}{>{\scriptsize}c}
\theoremstyle{plain}
\newtheorem{theorem}{Theorem}[section]
\newtheorem{lemma}{Lemma}
\newtheorem{corollary}[theorem]{Corollary}
\theoremstyle{definition}
\theoremstyle{remark}
\def\eqref#1{equation~\ref{#1}}
\def\1{\bm{1}}
\DeclareMathAlphabet{\mathsfit}{\encodingdefault}{\sfdefault}{m}{sl}
\SetMathAlphabet{\mathsfit}{bold}{\encodingdefault}{\sfdefault}{bx}{n}
\title{Multi-Agent Guided Policy Optimization}
\author{Yueheng Li$^{1}$, Guangming Xie$^{1*}$, Zongqing Lu$^{2}$\thanks{Corresponding authors.} \\
$^{1}$School of Advanced Manufacturing and Robotics, Peking University \\
$^{2}$School of Computer Science, Peking University \\
\texttt{\{liyueheng,xiegming,zongqing.lu\}@pku.edu.cn} \\
}
\begin{document}

\maketitle

\begin{abstract}

Due to practical constraints such as partial observability and limited communication, Centralized Training with Decentralized Execution (CTDE) has become the dominant paradigm in cooperative Multi-Agent Reinforcement Learning (MARL). 
However, existing CTDE methods often underutilize centralized training or lack theoretical guarantees. 
We propose Multi-Agent Guided Policy Optimization (MAGPO), a novel framework that better leverages centralized training by integrating centralized guidance with decentralized execution. 
MAGPO uses an autoregressive joint policy for scalable, coordinated exploration and explicitly aligns it with decentralized policies to ensure deployability under partial observability. 
We provide theoretical guarantees of monotonic policy improvement and empirically evaluate MAGPO on 43 tasks across 6 diverse environments. 
Results show that MAGPO consistently outperforms strong CTDE baselines and matches or surpasses fully centralized approaches, offering a principled and practical solution for decentralized multi-agent learning. Our code and experimental data can be found in \href{https://github.com/liyheng/MAGPO}{https://github.com/liyheng/MAGPO}.

\end{abstract}

\section{Introduction}

Cooperative Multi-Agent Reinforcement Learning (MARL) provides a powerful framework for solving complex real-world problems such as autonomous driving \citep{zhou2020smarts}, traffic management \citep{singh2020hierarchical}, and robot swarm coordination \citep{huttenrauch2017guided,zhang2021decentralized}. However, MARL faces two fundamental challenges: the exponential growth of the joint action space with the number of agents, which hinders scalability, and the requirement for decentralized execution under partial observability, which complicates policy learning.

A widely adopted solution is Centralized Training with Decentralized Execution (CTDE) \citep{oliehoek2008optimal,kraemer2016multI}, where agents are trained using privileged global information but execute independently based on local observations. CTDE forms the foundation of many state-of-the-art MARL algorithms and typically incorporates a centralized value function to guide decentralized policies or utility functions during training. This setup allows algorithms to benefit from global context without violating the constraints of decentralized deployment.

Parallel developments in single-agent RL have explored analogous ideas under the Partially Observable Markov Decision Process (POMDP) setting \citep{oliehoek2016concise}. 
Two representative paradigms have emerged. 
The first is \emph{asymmetric actor–critic} \citep{pinto2018asymmetric}, where the critic has access to full state information during training while the actor is restricted to partial observations. 
Conceptually, this setup can be viewed as the single-agent counterpart of CTDE: centralized information is exploited in training but not used at execution time.
The second paradigm is \emph{teacher–student learning}, where a teacher policy trained with privileged information provides direct behavioral supervision to a student policy that must operate under partial observability. 
Unlike asymmetric actor–critic, which transfers value information, teacher–student methods transfer action-level knowledge, potentially enabling stronger guidance.

While asymmetric designs have long been embedded in CTDE-style MARL algorithms through centralized critics, the teacher–student paradigm has only recently been extended to multi-agent settings. 
This has led to the Centralized Teacher with Decentralized Student (CTDS) framework \citep{9999468}, which augments CTDE with an explicit centralized teacher policy. 
In CTDS, a teacher outputs joint actions based on the global state, and decentralized student policies are trained to imitate these actions.

Although CTDS holds the promise of leveraging centralized coordination more effectively than value-based CTDE methods, it introduces structural challenges that are particularly pronounced in MARL.
First, learning a centralized teacher over the joint action space suffers from poor scalability, as the space grows exponentially with the number of agents.  
Second, even if a strong teacher is obtained, decentralized students may suffer from a fundamental \emph{imitation gap} \citep{10.5555/3540261.3541724}. 
In multi-agent settings, this gap is exacerbated by \emph{policy asymmetry}: the centralized teacher conditions on global state and joint context, whereas decentralized students must act based solely on local observations. 
As a result, the space of realizable decentralized joint behaviors may not contain the teacher's strategy, leading to unavoidable performance degradation.

To overcome these limitations, we propose \textbf{Multi-Agent Guided Policy Optimization (MAGPO)}, a novel framework that bridges centralized training and decentralized execution through a principled and MARL-specific design.
MAGPO addresses the scalability and \emph{policy asymmetry} problem by constraining a centralized, autoregressive guider policy to remain closely aligned with decentralized learners throughout training.
The guider policy allows agents to act sequentially conditioned on previous actions, utilizing global information and coordinated data collection \citep{MAT,mahjoub2025sable}.
The alignment ensures that the coordination strategies developed under centralized supervision remain realizable by decentralized policies, thus mitigating the imitation gap that undermines prior CTDS approaches.
Unlike a direct extension of single-agent GPO \citep{li2025guidedpolicyoptimizationpartial}, MAGPO introduces structural mechanisms tailored to multi-agent learning, including sequential joint action modeling and decentralization-aligned updates, while preserving scalability and parallelism.
We provide theoretical guarantees of monotonic policy improvement and empirically evaluate MAGPO across 43 tasks in 6 diverse environments. Results show that MAGPO consistently outperforms strong CTDE baselines and even matches or exceeds fully centralized methods, establishing it as a theoretically grounded and practically deployable solution for MARL under partial observability.

\section{Background}
\subsection{Formulation}\label{sec:2.1}
We consider Decentralized Partially Observable Markov Decision Process (Dec-POMDP) \citep{oliehoek2016concise} in modeling cooperative multi-agent tasks.
The Dec-POMDP is characterized by the tuple $\langle \mathcal{N},\mathcal{S},\mathcal{A},r,\mathcal{P},\mathcal{O}, \mathcal{Z},\gamma\rangle$, where $\mathcal{N}$ is the set of agents, $\mathcal{S}$ is the set of states, $\mathcal{A}$ is the set of actions, $r$ is the reward function, $\mathcal{P}$ is the transition probability function, $\mathcal{Z}$ is the individual partial observation generated by the
observation function $\mathcal{O}$, and $\gamma$ is the discount factor.
At each timestep, each agent $i\in\mathcal{N}$ receives a partial observation $o_i \in \mathcal{Z}$ according to $\mathcal{O}(s;i)$ at state $s\in\mathcal{S}$. Then, each agent selects an action $a_i\in\mathcal{A}$ according to its action-observation history $\tau_i\in(\mathcal{Z}\times\mathcal{A})^*$, collectively forming a joint action denoted as $\bm{a}$. The state $s$ undergoes a transition to the next state $s'$  in accordance with $\mathcal{P}(s'|s,\bm{a})$, and agents receive a shared reward $r$. 
Assuming an initial state distribution $\rho \in \Delta(\mathcal{S})$, the goal is to find a decentralized policy $\bm{\pi} = \{\pi_i\}_{i=1}^{n}$ that maximizes the expected cumulative return:
\begin{equation}
    V_\rho(\pi)\triangleq\mathbb{E}_{s\sim\rho}[V_{\bm{\pi}}(s)]=\mathbb{E}[\sum_{t=0}^\infty\gamma^tr_t|s_0\sim\rho].
\end{equation}

This work follows the Centralized Training with Decentralized Execution (CTDE) paradigm \citep{oliehoek2008optimal,kraemer2016multI}. During training, CTDE allows access to global state to stabilize learning. However, during execution, each agent operates independently, relying solely on its local action-observation history.

In centralized training, we can therefore optimize a joint policy $\bm{\mu}(\bm{a}|s)$ that coordinates all agents while enabling joint exploration. To update this joint policy, we will consider the policy mirror descent (PMD) objective \citep{DBLP:journals/corr/abs-1909-02769}:
\begin{equation}
\mu^{(k+1)}=\underset{\mu}{\arg\max}\left\{\eta_k\langle\nabla V_\rho(\mu^{(k)}),\mu\rangle     -\frac{1}{1-\gamma}D_{d_\rho(\mu^{(k)})}(\mu,\mu^{(k)})\right\},
\end{equation}
where $\eta_k$ is the step size, 
$\rho \in \Delta(\mathcal{S})$ is an arbitrary state distribution, $d_\rho(\mu^{(k)})$ is the discounted state-visitation distribution under $\mu^{(k)}$, and $D_{d_\rho(\mu^{(k)})}$ denotes the corresponding weighted Bregman divergence.
Conceptually, PMD can be viewed as a class of preconditioned policy gradient methods. Choosing the Bregman divergence to be the Euclidean distance or the Kullback–Leibler (KL) divergence recovers projected policy gradient and natural policy gradient (NPG) \citep{npg}, respectively. Thus, PMD provides a unifying framework that encompasses many modern policy-based RL algorithms, including TRPO \citep{DBLP:journals/corr/SchulmanLMJA15} and PPO \citep{DBLP:journals/corr/SchulmanWDRK17}.
We build upon this formulation to develop our theoretical results in Section~\ref{sec:magpo}.

\subsection{Related Works}

\paragraph{CTDE.}
CTDE methods can be broadly categorized into value-based and policy-based approaches.
Value-based methods typically employ a joint value function conditioned on the global state and joint action, alongside individual utility functions based on local observations and actions. These functions often satisfy the Individual-Global-Max (IGM) principle \citep{son2019qtran}, ensuring that the optimal joint policy decomposes into locally optimal policies. This line of work is known as value factorization, and includes methods such as VDN \citep{sunehag2017value}, QMIX \citep{rashid2020monotonic}, QTRAN \citep{son2019qtran}, QPLEX \citep{wang2020qplex}, and QATTEN \citep{yang2020qatten}.
Policy-based methods, in contrast, typically use centralized value functions to guide decentralized policies, allowing for direct extensions of single-agent policy gradient methods to multi-agent settings. Notable examples include COMA \citep{foerster2018counterfactual}, MADDPG \citep{lowe2017multi}, MAA2C \citep{papoudakis2020benchmarking} and MAPPO \citep{yu2022the}. Additionally, hybrid methods that combine value factorization with policy-based training have been proposed, such as DOP \citep{wang2020off}, FOP \citep{zhang2021fop}, and FACMAC \citep{peng2021facmac}.
While CTDE has achieved strong empirical performance, most existing methods leverage global information only through the value function. We refer to these as \textbf{vanilla CTDE} methods, as they do not fully exploit the potential of centralized training.

\paragraph{CTDS.}
More recently, researchers have explored extending the teacher-student framework from single-agent settings to multi-agent systems, leading to the Centralized Teacher with Decentralized Students (CTDS) paradigm \citep{9999468,chen2024ptdepersonalizedtrainingdistilled,zhou2025centralizedtrainingdecentralizedexecution}. 
In this framework, a centralized teacher policy—accessing global state and acting jointly—collects high-quality trajectories and facilitates more coordinated exploration.
CTDS methods offer stronger supervision than vanilla CTDE methods. 
However, due to observation asymmetry \citep{10.5555/3540261.3541724} and policy space mismatch, the learned decentralized policies may still suffer from suboptimal performance—issues that we explore further in the next section.

\paragraph{HARL.}
In contrast to vanilla CTDE and CTDS methods—many of which lack theoretical guarantees—another line of research focuses on Heterogeneous Agent Reinforcement Learning (HARL), where agents are updated sequentially during training \citep{zhong2023heterogeneousagentreinforcementlearning}.
This formulation underpins algorithms such as HATRPO and HAPPO \citep{kuba2022trustregionpolicyoptimisation} and HASAC \citep{liu2025maximumentropyheterogeneousagentreinforcement}.
While HARL provides better theoretical guarantees and stability, it requires agents to be heterogeneous and updated one at a time. 
As a result, these methods lack parallelism which is important in large-scale MARL tasks and cannot exploit parameter sharing, which has proven effective in  many MARL applications \citep{gupta2017cooperative,terry2020revisiting,christianos2021scalingmultiagentreinforcementlearning}. 

\paragraph{CTCE.}
Centralized Training with Centralized Execution (CTCE) approaches treat the multi-agent system as a single-agent problem with a combinatorially large action space. Beyond directly applying single-agent RL algorithms to MARL, a promising direction in CTCE has been to use transformers \citep{vaswani2017attention} to frame multi-agent trajectories as sequences \citep{chen2021decision}. This has led to the development of powerful transformer-based methods such as Updet \citep{hu2021updet}, Transfqmix \citep{gallici2023transfqmix}, and other offline methods \citep{meng2023offline, tseng2022offline, zhang2022discovering}.
Two representative online methods are Multi-Agent Transformer (MAT) \citep{MAT} and Sable \citep{mahjoub2025sable}, which currently achieve state-of-the-art performance in cooperative MARL tasks. 
CTCE methods offer strong theoretical guarantees \citep{MAT} and impressive empirical results. However, they fall short in practical settings that demand decentralized execution, where each agent must act based solely on its local observation and policy.

\section{Problems of Current CTDS}\label{sec:3}
In this section, we examine the limitations of using a centralized teacher to supervise decentralized student policies. Two key challenges arise in this setting: \textbf{asymmetric observation spaces} and \textbf{asymmetric policy spaces}.

The first challenge—\textit{asymmetric observation spaces}—is shared with single-agent POMDPs involving privileged information and has been extensively studied in prior work~\citep{Warrington2020RobustAL,10.5555/3540261.3541724,TGRL,li2025guidedpolicyoptimizationpartial}.
When the teacher relies on privileged information that is unavailable (and not inferable) to the student, the student cannot faithfully reproduce the teacher’s behavior. Instead, it learns to approximate the conditional expectation of the teacher’s action given the observable input~\citep{10.5555/3540261.3541724,Warrington2020RobustAL}. This typically results in an “averaged” behavior that can be significantly suboptimal.

The second challenge—\textit{asymmetric policy spaces}—is unique to the multi-agent setting. It arises from the structural mismatch between the teacher’s policy (typically joint and expressive) and the students’ policies (factorized and decentralized). We illustrate this challenge through a simple example shown in Figure~\ref{fig:didact}.
Consider a cooperative task where three agents must each output an integer such that their sum equals a target value of $10$. Each agent acts once, and the system succeeds only if the total sum is exactly $10$. 
We compare three MARL frameworks:
\begin{figure}[t]
    \centering
    \includegraphics[width=0.99\linewidth]{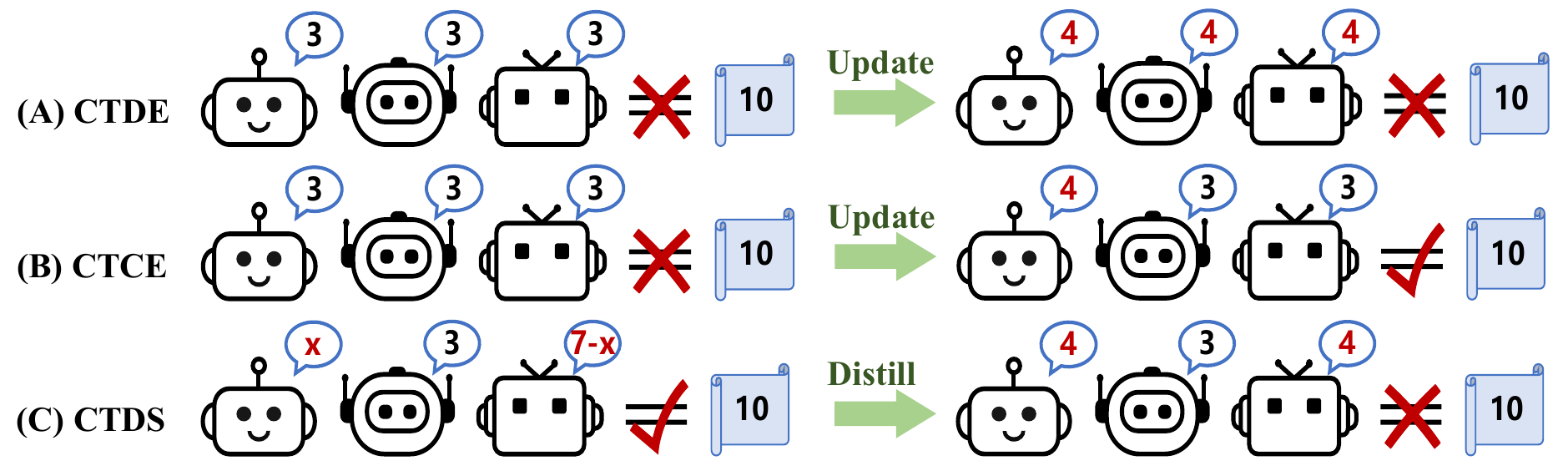}
    \caption{Illustrative example showing three different MARL settings.}
    \label{fig:didact}
\end{figure}

\textbf{(A) Vanilla CTDE}.  
Agents share a centralized value function but act independently via decentralized policies.  
Suppose all three agents use the same deterministic policy $\pi^i(\cdot|10)=3$, producing a total of $9$—which fails the task.  
Because each agent observes the same global state and optimizes the same objective, they may all simultaneously increase their action to $4$ in the next update, producing $12$, still failing.  
Lacking inter-agent coordination signals, the agents struggle to determine which one should adjust its action. This leads to classic miscoordination, requiring random trial-and-error exploration and memorization of rare successful configurations to eventually coordinate.

\textbf{(B) CTCE}.  
Agents act sequentially, observing previous agents’ actions before choosing their own.  
Suppose the first agent updates its action to $4$ and the second still picks $3$. The third agent, having observed both previous actions, selects $3$ and achieves the correct total.  
Sequential execution effectively transforms the multi-agent coordination problem into a single-agent decision-making process over a joint policy. This makes coordination straightforward and stable, without repeated random exploration.  
However, this setting assumes centralized execution, which is often infeasible in real-world applications requiring decentralization.

\textbf{(C) CTDS}.  
Now consider distilling a successful CTCE policy from (B) into decentralized student policies.  
If the teacher’s policy is deterministic and factorizable (e.g., always producing $[4,3,3]$), CTDS can recover an optimal decentralized solution.  
However, a CTCE teacher may exploit stochastic strategies. For instance, the first agent samples $x \in \{3,4\}$ at random, the second agent always outputs $3$, and the third agent—having seen the first two actions—outputs $7 - x$, ensuring the total is always $10$.  
While this is optimal under CTCE, it cannot be factored into independent policies: CTDS would learn two independent stochastic policies for the first and third agents, leading to failures such as $[4,3,4]$.  
If the first CTCE agent selects $3$ or $4$ with equal probability, the distilled decentralized policy succeeds only $50\%$ of the time.  

This example highlights the core failure mode: coordination patterns encoded in the teacher’s joint policy are lost when forced into a decentralized representation.
To address these limitations, we propose a new approach that constrains the teacher’s policy during training, preventing it from exploiting unrepresentable coordination strategies while still allowing it to guide decentralized learners effectively.

\section{Method}
We introduce \textbf{Multi-Agent Guided Policy Optimization (MAGPO)}, a framework that leverages a centralized, sequentially executed guider policy to supervise decentralized learners while keeping them closely aligned. MAGPO is designed to combine the coordination benefits of centralized training with the deployment constraints of decentralized execution.

We begin by presenting the theoretical formulation and guarantee of monotonic policy improvement in the tabular setting. For simplicity, we initially assume full observability—i.e., all agents observe the global state 
$s$, reducing the setting to a cooperative Markov game \citep{littman1994markov}. We will return to the partially observable case in the subsequent implementation section.

\subsection{Multi-Agent Guided Policy Optimization}\label{sec:magpo}
Our algorithm maintains a centralized guider policy with an autoregressive structure over agent actions: $\bm{\mu}(\bm{a}|s)=\mu^{i_1}(a^{i_1}|s)\mu^{i_2}(a^{i_2}|s,a^{i_1})\dots\mu^{i_n}(a^{i_n}|s,\bm{a}^{i_{1:n-1}})$, where $i_{1:m}$ (with $m\le n$) denotes an ordered subset $\{i_1,...,i_m\}$ of the agent set $\mathcal{N}$, specifying the execution order.
The decentralized learner policy is defined as: $\bm{\pi}(\bm{a}|s)=\prod_{j=1}^n\pi^{i_j}(a^{i_j}|s)$ for any ordering $i_{1:n}$, implying that all agents act independently.

Building on this structure, MAGPO optimizes the centralized guider and decentralized learner policies through an iterative four-step procedure inspired by the GPO framework \citep{li2025guidedpolicyoptimizationpartial}:
\begin{itemize}[leftmargin=1em]
\item \textbf{Data Collection}: Roll out the current guider policy $\bm{\mu}_k$ to collect trajectories.
\item \textbf{Guider Training}: Update the guider $\bm{\mu}_k$ to $\hat{\bm{\mu}}_k$ by maximizing RL objective.
\item \textbf{Learner Training}: Update the learner $\bm{\pi}_{k}$ to $\bm{\pi}_{k+1}$ by minimizing the KL distance $\text{D}_{\text{KL}}(\bm{\pi},\hat{\bm{\mu}}_k)$. 
\item \textbf{Guider Backtracking}: Set $\bm{\mu}_{k+1} = \bm{\pi}_{k+1}$ for all states $s$.
\end{itemize}

The first step allows MAGPO to perform coordinated exploration using a joint policy. In the second step, the guider is updated using the Policy Mirror Descent (PMD) framework~\citep{PMD}, which solves the following optimization:
\begin{equation}
    \hat{\bm{\mu}}_k=\arg\max_{\bm{\mu}}\left\{\eta_k\langle Q^{\bm{\mu}_k}(s,\cdot),\bm{\mu}(\cdot|s)\rangle-\text{D}_{\text{KL}}(\bm{\mu}(\cdot|s),\bm{\mu}_k(\cdot|s))\right\},
\end{equation}
where $Q^{\bm{\mu}_k}$ is the Q-function of guider and $\eta_k$ is the learning rate.
As discussed in Section~\ref{sec:2.1}, PMD is a general policy gradient framework that subsumes popular algorithms such as PPO and TRPO. Here, we adopt PMD for theoretical clarity and instantiate it using PPO-style updates in our practical implementation. Additional details are provided in Appendix~\ref{app:proof}.
In the final step, we perform guider backtracking, where the guider is reset to the current learner policy. Theoretically, this is always feasible since any decentralized policy $\bm{\pi}$ defines a valid autoregressive joint policy $\bm{\mu}$ by simply ignoring the conditioning on past actions. 

Based on the framework introduced above, we have the following theorem for MAGPO.

\begin{theorem}[Monotonic Improvement of MAGPO]
Let $(\bm{\pi}_k)_{k=0}^\infty$ be the sequence of joint learner policies obtained by iteratively applying the four steps of MAGPO. Then,
\begin{equation}
    V_\rho(\bm{\pi}_{k+1})\ge V_\rho(\bm{\pi}_{k}), \ \forall k,
\end{equation}
where $V_\rho$ is the expected return under initial state distribution $\rho$.
\end{theorem}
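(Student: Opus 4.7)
The plan is to exploit the backtracking step, which ensures that at the beginning of iteration $k$ we have $\bm{\mu}_k = \bm{\pi}_k$ (and hence $Q^{\bm{\mu}_k} = Q^{\bm{\pi}_k}$), so that the guider and learner share a common anchor. From this anchor I would show that the composition of steps 2 and 3 acts exactly like a constrained Policy Mirror Descent update on the learner, performed inside the decentralized policy class $\Pi_{\text{dec}}$, and then invoke the performance difference lemma.

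First, I would use the closed-form PMD solution from step 2,
\begin{equation}
    \hat{\bm{\mu}}_k(\bm{a}|s) \propto \bm{\mu}_k(\bm{a}|s)\exp\bigl(\eta_k Q^{\bm{\mu}_k}(s,\bm{a})\bigr),
\end{equation}
and plug it into the KL objective of step 3. A direct expansion yields
\begin{equation}
    \mathrm{D}_{\mathrm{KL}}(\bm{\pi}(\cdot|s),\hat{\bm{\mu}}_k(\cdot|s)) = \mathrm{D}_{\mathrm{KL}}(\bm{\pi}(\cdot|s),\bm{\mu}_k(\cdot|s)) - \eta_k\langle Q^{\bm{\mu}_k}(s,\cdot),\bm{\pi}(\cdot|s)\rangle + \log Z(s),
\end{equation}
where $Z(s)$ is the partition function, independent of $\bm{\pi}$. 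Since $\bm{\mu}_k = \bm{\pi}_k$, the learner update is therefore equivalent, state-wise, to
\begin{equation}
    \bm{\pi}_{k+1}(\cdot|s) = \arg\max_{\bm{\pi}\in\Pi_{\text{dec}}}\bigl\{\eta_k\langle Q^{\bm{\pi}_k}(s,\cdot),\bm{\pi}(\cdot|s)\rangle - \mathrm{D}_{\mathrm{KL}}(\bm{\pi}(\cdot|s),\bm{\pi}_k(\cdot|s))\bigr\},
\end{equation}
i.e., a PMD step restricted to decentralized policies, initialized from $\bm{\pi}_k$ itself.

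Second, because $\bm{\pi}_k\in\Pi_{\text{dec}}$ is a feasible point of that program and attains objective value $\eta_k V^{\bm{\pi}_k}(s)$ (the KL term vanishes at $\bm{\pi}_k$), the optimum $\bm{\pi}_{k+1}$ must do at least as well. Rearranging the resulting inequality gives the one-step advantage bound
\begin{equation}
    \mathbb{E}_{\bm{a}\sim\bm{\pi}_{k+1}(\cdot|s)}\bigl[A^{\bm{\pi}_k}(s,\bm{a})\bigr] \ge \tfrac{1}{\eta_k}\mathrm{D}_{\mathrm{KL}}(\bm{\pi}_{k+1}(\cdot|s),\bm{\pi}_k(\cdot|s)) \ge 0,
\end{equation}
which holds at every state $s$. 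Finally, I plug this pointwise nonnegative advantage into the performance difference lemma,
\begin{equation}
    V_\rho(\bm{\pi}_{k+1}) - V_\rho(\bm{\pi}_k) = \tfrac{1}{1-\gamma}\,\mathbb{E}_{s\sim d^{\bm{\pi}_{k+1}}_\rho}\mathbb{E}_{\bm{a}\sim\bm{\pi}_{k+1}}\bigl[A^{\bm{\pi}_k}(s,\bm{a})\bigr] \ge 0,
\end{equation}
to conclude.

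The main obstacle is the reduction in step 2 of the argument: one must be careful that although $\hat{\bm{\mu}}_k$ is a general (non-factorizable) joint distribution over the joint action space, the KL divergence between a factorized $\bm{\pi}$ and $\hat{\bm{\mu}}_k$ still cleanly separates into a $\bm{\pi}$-dependent PMD-style term plus a constant. This relies on (i) the tabular, state-wise formulation where the closed-form PMD solution is exact, (ii) treating the backtracked learner $\bm{\pi}_k$ as a legitimate autoregressive joint policy whose conditionals simply ignore preceding actions, so that $D_{\mathrm{KL}}(\bm{\pi},\bm{\mu}_k)$ and $\langle Q^{\bm{\mu}_k},\bm{\pi}\rangle$ are computed against the \emph{same} joint distribution. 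Once these identifications are made carefully, the rest of the argument is a clean application of PMD improvement plus the performance difference lemma.
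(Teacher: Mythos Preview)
Your proposal is correct and follows essentially the same route as the paper: both substitute the closed-form PMD update $\hat{\bm{\mu}}_k\propto\bm{\pi}_k\exp(\eta_k Q^{\bm{\pi}_k})$ (using backtracking to replace $\bm{\mu}_k$ by $\bm{\pi}_k$), expand $\mathrm{D}_{\mathrm{KL}}(\bm{\pi},\hat{\bm{\mu}}_k)$, compare the learner minimizer $\bm{\pi}_{k+1}$ against the feasible point $\bm{\pi}_k$ to obtain the state-wise inequality $\eta_k\mathbb{E}_{\bm{\pi}_{k+1}}[Q^{\bm{\pi}_k}]-\eta_k\mathbb{E}_{\bm{\pi}_k}[Q^{\bm{\pi}_k}]\ge\mathrm{D}_{\mathrm{KL}}(\bm{\pi}_{k+1},\bm{\pi}_k)$, and then finish with the performance difference lemma. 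The only cosmetic difference is that you explicitly name the intermediate step a ``constrained PMD over $\Pi_{\text{dec}}$,'' whereas the paper writes out the KL comparison directly.
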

\begin{proof}
See Appendix~\ref{app:proof}.
\end{proof}
In contrast to CTDS and standard CTDE methods like MAPPO, MAGPO provides a provable guarantee of policy improvement. 
This result can be understood intuitively: the guider identifies a policy that improves return in the full joint space using PMD. The learner then projects this policy into the decentralized policy space via KL minimization. Since the target was chosen via projected gradient, the resulting learner policy also improves return.

To further clarify the structure of MAGPO, we show that its learner updates can be interpreted as sequential advantage-based updates—a procedure known to ensure monotonic improvement in multi-agent settings \citep{kuba2022trustregionpolicyoptimisation}.
We begin with the following lemma:
\begin{lemma}[Multi-Agent Advantage Decomposition \citep{kuba2022trustregionpolicyoptimisation}]
In any cooperative Markov game, given a joint policy $\bm{\pi}$, for any state $s$, and any agent subset $i_{1:m}$, the following equations hold:
\begin{equation}
    A_{\bm{\pi}}^{i_{1:m}}(s,\bm{a}^{i_{1:m}})=\sum_{j=1}^m A_{\bm{\pi}}^{i_{j}}(s,\bm{a}^{i_{1:j-1}},a^{i_j}),
\end{equation}
where 
\begin{equation}
    A_{\bm{\pi}}^{i_{1:m}}(s,\bm{a}^{j_{1:k}},\bm{a}^{i_{1:m}})\triangleq Q^{j_{1:k},i_{1:m}}(s,,\bm{a}^{j_{1:k}},\bm{a}^{i_{1:m}})-Q^{j_{1:k}}(s,\bm{a}^{j_{1:k}})
\end{equation}
for disjoint sets $j_{1:k}$ and $i_{1:m}$. The state-action value function for a subset is defined as
\begin{equation}
    Q^{i_{1:m}}(s,\bm{a}^{i_{1:m}})\triangleq\mathbb{E}_{\bm{a}^{-i_{1:m}}\sim\bm{\pi}^{-i_{1:m}}}\left[Q(s,\bm{a}^{i_{1:m}},\bm{a}^{-i_{1:m}})\right].
\end{equation}
\end{lemma}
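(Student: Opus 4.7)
The plan is to prove the identity by interpreting both sides in terms of the subset Q-function and exploiting a telescoping structure, following the cleanest route through the definitions given in the lemma.

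First, I would unpack the left-hand side. Applying the stated definition of $A_{\bm{\pi}}^{i_{1:m}}$ with the outer action argument $\bm{a}^{j_{1:k}}$ set to the empty tuple (i.e., $j_{1:k} = \emptyset$), one obtains
\begin{equation}
A_{\bm{\pi}}^{i_{1:m}}(s,\bm{a}^{i_{1:m}}) = Q^{i_{1:m}}(s,\bm{a}^{i_{1:m}}) - V_{\bm{\pi}}(s),
\end{equation}
where I use the convention that $Q^{\emptyset}(s) = \mathbb{E}_{\bm{a}\sim\bm{\pi}}[Q(s,\bm{a})] = V_{\bm{\pi}}(s)$, which follows directly from the marginalization definition of $Q^{i_{1:m}}$ when the indexed subset is empty. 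I would take a line to justify this convention from the expectation in the definition.

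Second, I would rewrite each summand on the right-hand side using the same definition, now with $j_{1:k} = i_{1:j-1}$ and the singleton subset $\{i_j\}$:
\begin{equation}
A_{\bm{\pi}}^{i_j}(s,\bm{a}^{i_{1:j-1}},a^{i_j}) = Q^{i_{1:j}}(s,\bm{a}^{i_{1:j}}) - Q^{i_{1:j-1}}(s,\bm{a}^{i_{1:j-1}}).
\end{equation}
Summing from $j=1$ to $m$ produces a telescoping series that collapses to $Q^{i_{1:m}}(s,\bm{a}^{i_{1:m}}) - Q^{i_{1:0}}(s)$, and identifying $Q^{i_{1:0}}$ with $V_{\bm{\pi}}(s)$ matches the expression obtained for the left-hand side in the first step. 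This is the entire argument.

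The main obstacle, if any, is bookkeeping around the empty-subset convention: one must check that the expectation in the definition of $Q^{i_{1:m}}$ behaves consistently at the two endpoints (no conditioning for $m=0$, full joint action for $m=n$) and that the partition $(i_{1:j-1}, i_j)$ in the concatenation $Q^{j_{1:k},i_{1:m}}$ is unambiguous under the assumed ordering. Once these notational points are pinned down, the proof is purely a telescoping identity and requires no appeal to the Bellman equation or to properties of $\bm{\pi}$ beyond those baked into the expectation defining $Q^{i_{1:m}}$. I would present the argument in three compact displayed equations followed by a one-line collapse of the telescoping sum.
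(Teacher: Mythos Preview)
Your proposal is correct, and the telescoping argument you outline is exactly the standard proof of this lemma. Note, however, that the present paper does not actually prove this statement: it is quoted verbatim from \citet{kuba2022trustregionpolicyoptimisation} and used as an ingredient in the proof of the subsequent corollary, so there is no in-paper proof to compare against. Your derivation---unpacking both sides in terms of the subset $Q$-functions, identifying $Q^{\emptyset}(s)=V_{\bm{\pi}}(s)$, and collapsing the sum $\sum_{j=1}^{m}\bigl(Q^{i_{1:j}}-Q^{i_{1:j-1}}\bigr)$---is precisely the argument given in the cited source, so nothing further is needed.
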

Using this, we derive the following:
\begin{corollary}[Sequential Update of MAGPO]
    The update for any individual policy $\pi^{i_j}$ with ordered subset $i_{1:j}$ can be written as:
\begin{equation}
    \pi^{i_j}_{k+1}=\arg\max_{\pi^{i_j}}\mathbb{E}_{a^{i_{1:j-1}}\sim\bm{\pi}_{k+1}^{i_{1:j-1}},a^{i_j}\sim\pi^{i_j}}\left[
    A_{\bm{\pi}}^{i_{j}}(s,\bm{a}^{i_{1:j-1}},a^{i_j})\right]-\frac{1}{\eta_k}\text{D}_{\text{KL}}(\pi^{i_j},\pi_k^{i_j})
\end{equation}
\end{corollary}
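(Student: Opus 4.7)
The plan is to unpack the four-step MAGPO update and show that, when the decentralized update is performed sequentially along the autoregressive ordering $i_{1:n}$, the per-agent subproblem reduces to the stated form. I will proceed in three stages.

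First, I would derive a closed form for $\hat{\bm{\mu}}_k$. The guider PMD step is a KL-regularized linear maximization over the probability simplex, whose solution is the exponential-tilt formula
\begin{equation}
\hat{\bm{\mu}}_k(\bm{a}|s)\propto \bm{\mu}_k(\bm{a}|s)\exp(\eta_k Q^{\bm{\mu}_k}(s,\bm{a})).
\end{equation}
By the guider-backtracking step we have $\bm{\mu}_k=\bm{\pi}_k$ (viewed as an autoregressive policy that ignores the conditioning on earlier actions), and by the Multi-Agent Advantage Decomposition lemma, $Q^{\bm{\pi}_k}(s,\bm{a})=V^{\bm{\pi}_k}(s)+\sum_{j=1}^n A^{i_j}_{\bm{\pi}_k}(s,\bm{a}^{i_{1:j-1}},a^{i_j})$. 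Substituting and absorbing the state-only factor $\exp(\eta_k V^{\bm{\pi}_k}(s))$ into the normalizer yields $\hat{\bm{\mu}}_k(\bm{a}|s)\propto \prod_{j}\pi_k^{i_j}(a^{i_j}|s)\exp(\eta_k A^{i_j}_{\bm{\pi}_k}(s,\bm{a}^{i_{1:j-1}},a^{i_j}))$.

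Second, I would tackle the learner step $\bm{\pi}_{k+1}=\arg\min_{\bm{\pi}}\mathrm{D}_{\mathrm{KL}}(\bm{\pi},\hat{\bm{\mu}}_k)$ restricted to the product family $\bm{\pi}=\prod_j \pi^{i_j}$. Plugging in the log-density of $\hat{\bm{\mu}}_k$ and that of $\bm{\pi}$, and using the additivity of KL between product distributions, the objective splits into a per-agent KL plus a sum of expected advantages; the coordinate-wise stationarity condition for $\pi^{i_j}$ (with all other factors held fixed) is
\begin{equation}
\pi^{i_j}(a^{i_j}|s)\propto \pi_k^{i_j}(a^{i_j}|s)\exp\Bigl(\eta_k\sum_{l=1}^n \mathbb{E}_{\bm{a}^{-i_j}\sim\prod_{m\ne j}\pi^{i_m}}\bigl[A^{i_l}_{\bm{\pi}_k}(s,\bm{a}^{i_{1:l-1}},a^{i_l})\bigr]\Bigr),
\end{equation}
which is exactly the KKT condition for the proximal problem $\pi^{i_j}_{k+1}=\arg\max_{\pi^{i_j}}\mathbb{E}[\cdot]-\tfrac{1}{\eta_k}\mathrm{D}_{\mathrm{KL}}(\pi^{i_j},\pi_k^{i_j})$.

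Third, I would simplify the inner sum under a sequential update schedule where, at the moment $\pi^{i_j}$ is being updated, the already-updated factors are $\pi_{k+1}^{i_{1:j-1}}$ and the not-yet-updated factors are still $\pi_k^{i_{j+1:n}}$. Terms with $l<j$ do not depend on $a^{i_j}$ and enter only the normalizer. The term $l=j$ contributes $\mathbb{E}_{\bm{a}^{i_{1:j-1}}\sim\bm{\pi}_{k+1}^{i_{1:j-1}}}[A^{i_j}_{\bm{\pi}_k}(s,\bm{a}^{i_{1:j-1}},a^{i_j})]$, precisely the first term in the claim. For $l>j$, the inner expectation over $a^{i_l}\sim\pi_k^{i_l}$ of the conditional advantage $A^{i_l}_{\bm{\pi}_k}(s,\bm{a}^{i_{1:l-1}},a^{i_l})$ vanishes: by the definition given in the lemma, $Q^{i_{1:l-1}}$ is obtained from $Q^{i_{1:l}}$ by averaging $a^{i_l}$ under $\bm{\pi}_k$, so $\mathbb{E}_{a^{i_l}\sim \pi_k^{i_l}}[A^{i_l}_{\bm{\pi}_k}]=0$ pointwise in $\bm{a}^{i_{1:l-1}}$, and hence the tower expectation over the rest is also zero. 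All terms with $l>j$ therefore drop out, leaving exactly the optimization in the corollary.

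The main obstacle I anticipate is the third step: making rigorous that the $l>j$ contributions collapse to zero requires explicitly invoking the zero-mean property of conditional advantages under the base policy $\bm{\pi}_k$ together with the specific sequential schedule; this is also where the choice $\bm{\mu}_k=\bm{\pi}_k$ from backtracking is essential, since otherwise the inner expectation would not be taken under $\pi_k^{i_l}$ and the simplification would fail.
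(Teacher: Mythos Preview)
Your proposal is correct and follows the same overall arc as the paper—closed-form PMD solution, advantage decomposition, then a sequential per-agent argument—but your Step~3 supplies a justification that the paper treats only informally. The paper first factors $\hat{\bm{\mu}}_k$ autoregressively into $\hat{\mu}^{i_j}\propto\pi_k^{i_j}\exp(\eta_k A^{i_j}_{\bm{\pi}_k})$ with their own normalizers $z^j_k(s,\bm{a}^{i_{1:j-1}})$, applies a chain-rule KL decomposition, and then asserts that ``each policy $\pi^{i_j}$ is conditionally independent of the subsequent agents'' to justify optimizing the $j$-th term in isolation. You instead keep the single state-only normalizer $\bar z_k(s)$ and eliminate the $l>j$ contributions explicitly via the zero-mean identity $\mathbb{E}_{a^{i_l}\sim\pi_k^{i_l}}[A^{i_l}_{\bm{\pi}_k}(s,\bm{a}^{i_{1:l-1}},a^{i_l})]=0$, which is exactly what legitimizes the one-pass schedule with later agents held at $\pi_k$. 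The paper's route makes the link to the autoregressive guider conditionals more visible; yours avoids the bookkeeping of action-dependent normalizers and makes the cancellation of the downstream terms rigorous rather than implicit.
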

\begin{proof}
    See Appendix~\ref{app:proof}.
\end{proof}
This shows that MAGPO's learner updates are equivalent to performing sequential advantage-weighted policy updates. Importantly, unlike methods such as HARL which update agents one at a time, MAGPO allows for simultaneous updates of all agent policies. This enables parallel training and improves scalability to large agent populations.
Moreover, HARL requires heterogeneous agents to guarantee policy improvement, while MAGPO works with either homogeneous or heterogeneous agents, allowing it to benefit from parameter sharing—a widely adopted practice that significantly improves efficiency and generalization in MARL \citep{gupta2017cooperative,terry2020revisiting,christianos2021scalingmultiagentreinforcementlearning}.

\subsection{Practical Implementation}
In this subsection, we describe the practical implementation of MAGPO. Our implementation is based on the original GPO-clip framework \citep{li2025guidedpolicyoptimizationpartial}, extended to the multi-agent setting. The key difference is that the guider in MAGPO is a sequential execution policy.
Since MAGPO is compatible with any autoregressive CTCE method, we do not specify the exact encoder, decoder, or attention mechanisms used. Instead, we present general training objectives for both the guider and learner components.

\paragraph{Guider Update.}
As introduced in the previous section, the guider policy (parameterized by 
$\phi$) is first optimized to maximize the RL objective, and then aligned with the learner policy. This is achieved via an RL update augmented with a KL constraint:
\begin{equation}\label{eq:guider}
\begin{aligned}
    \mathcal{L}(\phi)=-\frac{1}{Tn}\sum_{j=1}^n\sum_{t=0}^{T-1}\Big[\min
    \Big(r_t^{i_j}(\phi) \hat{A}_t,
    \text{clip}(&r_t^{i_j}(\phi),\epsilon,\delta)\hat{A}_t\Big)
    -\\
    &m_t^{i_j}(\delta)\text{D}_{\text{KL}}\big(\mu^{i_j}_\phi(\cdot|s_t,\bm{a}_t^{i_{1:j-1}}),\pi_\theta^{i_j}(\cdot|o_t^{i_j})\big)\Big],
\end{aligned}
\end{equation}
where
\begin{align*}
r_t^{i_j}(\phi)=\frac{\mu_\phi^{i_j}(a^{i_j}_t|s_t,\bm{a}_t^{i_{1:j-1}})}{\mu_{\phi_{\text{old}}}^{i_j}(a^{i_j}_t|s_t,\bm{a}_t^{i_{1:j-1}})},\ \ \ 
    m_t^{i_j}(\delta)=\mathbb{I}\left(\frac{\mu_\phi^{i_j}(a^{i_j}_t|s_t,\bm{a}_t^{i_{1:j-1}})}{\pi_{\theta}^{i_j}(a^{i_j}_t|o_t^{i_j})}\notin\left(\frac{1}{\delta},\delta\right)\right),
\end{align*}
and
\begin{equation*}
\text{clip}(r_t^{i_j}(\phi),\epsilon,\delta)=\text{clip}\left(\text{clip}\left(
\frac{\mu_\phi^{i_j}(a^{i_j}_t|s_t,\bm{a}_t^{i_{1:j-1}})}{\pi_{\theta}^{i_j}(a^{i_j}_t|o_t^{i_j})}
,\frac{1}{\delta},\delta
\right)
\frac{\pi_{\theta}^{i_j}(a^{i_j}_t|o_t^{i_j})}{\mu_{\phi_{\text{old}}}^{i_j}(a^{i_j}_t|s_t,\bm{a}_t^{i_{1:j-1}})},1-\epsilon,1+\epsilon
\right).
\end{equation*}
This objective has two modifications compared to the standard one: a \textbf{double clipping function} clip$(\cdot,\epsilon,\delta)$ and a \textbf{mask function} $m^{i_j}(\delta)$, both controlled by a new hyperparameter $\delta>1$ , which bounds the ratio between guider and learner policies within $(\frac{1}{\delta},\delta)$.
The inner clip in the double clipping function stops the gradient when the advantage signal encourages the guider to drift too far from the learner.
The mask function ensures the KL loss is only applied when this ratio constraint is violated.
The advantage estimate $\hat{A}_t$ is computed via generalized advantage estimation (GAE) \citep{Schulman2015HighDimensionalCC} with value functions.

\paragraph{Learner Update.}
The learner policy $\bm{\pi}$, parameterized by $\theta$, is updated with two objectives: (i) behavior cloning toward the guider policy, and (ii) an RL auxiliary term to directly improve return from the collected trajectories.
\begin{equation}\label{eq:learner}
\begin{aligned}
    \mathcal{L}(\theta)=\frac{1}{Tn}\sum_{j=1}^n\sum_{t=0}^{T-1}\Big[\text{D}_{\text{KL}}\big(\pi_\theta^{i_j}(\cdot|o_t^{i_j}),
    &\mu^{i_j}_\phi(\cdot|s_t,\bm{a}_t^{i_{1:j-1}})\big)
    -\\
    &\lambda\min
    \left(r_t^{i_j}(\theta)\hat{A}_t,
    \text{clip}(r_t^{i_j}(\theta),1-\epsilon,1+\epsilon)\hat{A}_t\right)
    \Big],
\end{aligned}
\end{equation}
where
\begin{equation}
r_t^{i_j}(\theta)=
\frac{\pi_{\theta}^{i_j}(a^{i_j}_t|o_t^{i_j})}{\mu_{\phi_{\text{old}}}^{i_j}(a^{i_j}_t|s_t,\bm{a}_t^{i_{1:j-1}})}.
\end{equation}
The auxiliary RL objective helps maximize the utility of collected trajectories. Since the behavior policy (guider) is kept close to the learner, this term approximates an on-policy objective.
In principle, we could apply sequential updates to each individual policy—analogous to HAPPO—to preserve the theoretical guarantees of monotonic improvement. However, this makes the learner updates non-parallelizable and incompatible with parameter sharing.
Since no performance benefits are observed from using HAPPO, we instead adopt a MAPPO-style update: all learners share parameters and are updated jointly and in parallel. The auxiliary RL term can be treated as optional and controlled by $\lambda$.

\begin{figure}[!t]
\vspace{-1.1em}
    \centering
    \subfloat{
      \includegraphics[width=0.99\textwidth]{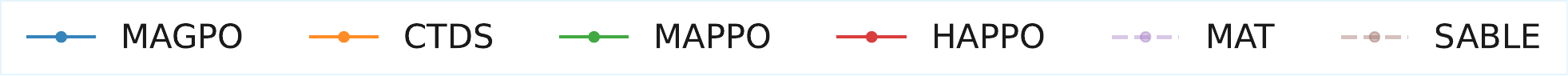}}
      \addtocounter{subfigure}{-1}
    \vspace{-0.8em} \\
    \subfloat[CoordSum]{
      \includegraphics[width=0.33\textwidth]{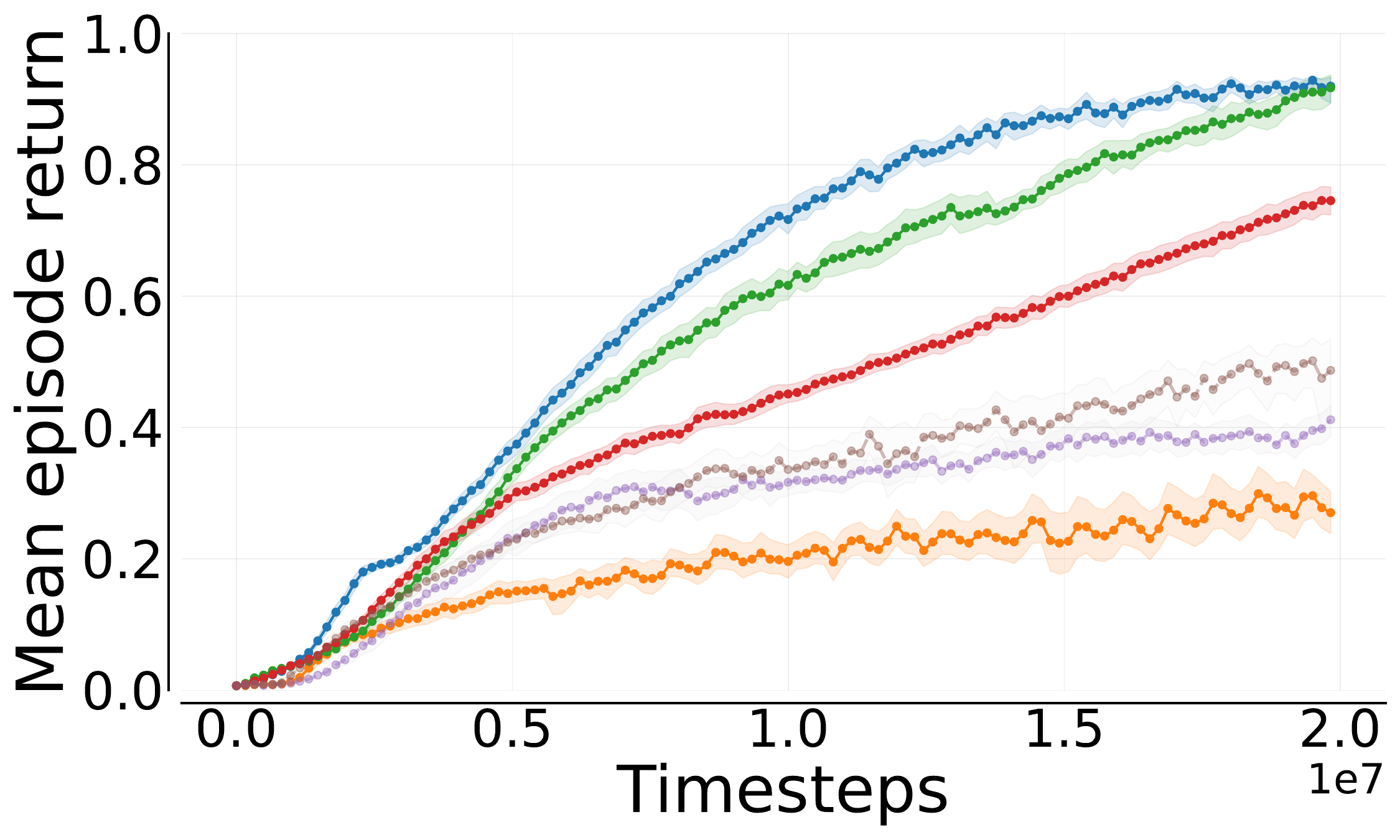}}
    \subfloat[LevelBasedForaging]{
      \includegraphics[width=0.33\textwidth]{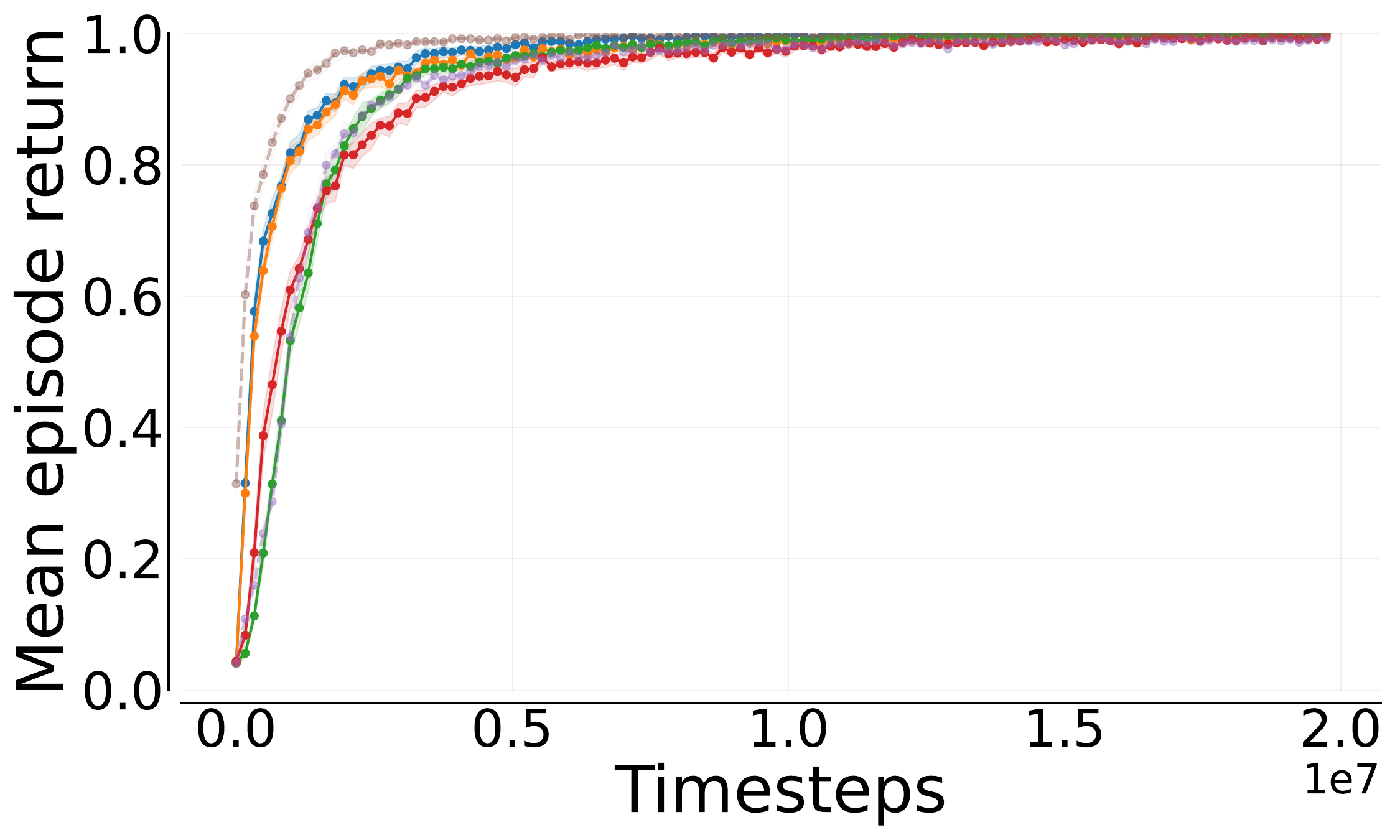}}
    \subfloat[MaConnector]{
      \includegraphics[width=0.33\textwidth]{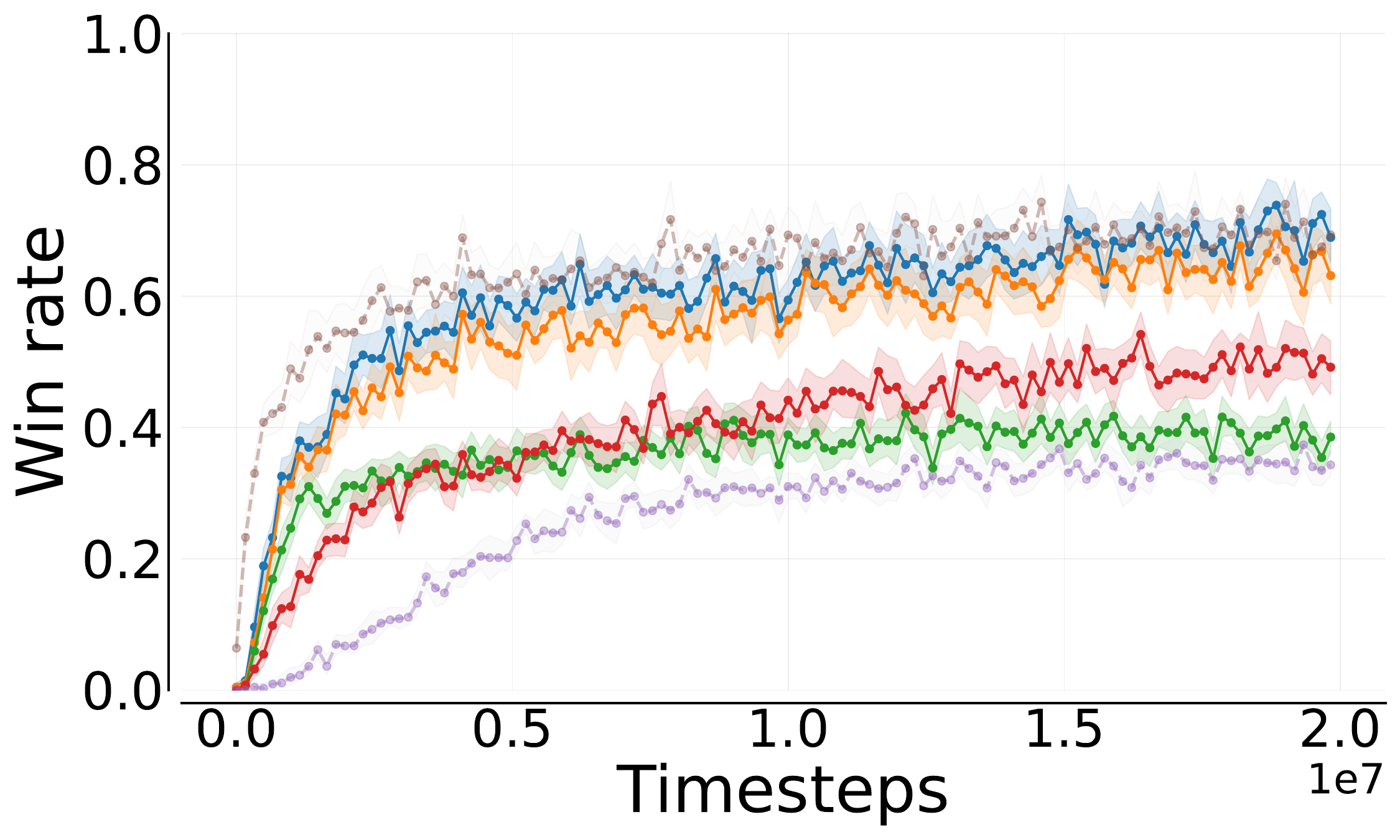}}  
    \vspace{-0.2em} \\
    \subfloat[MPE]{
      \includegraphics[width=0.33\textwidth]{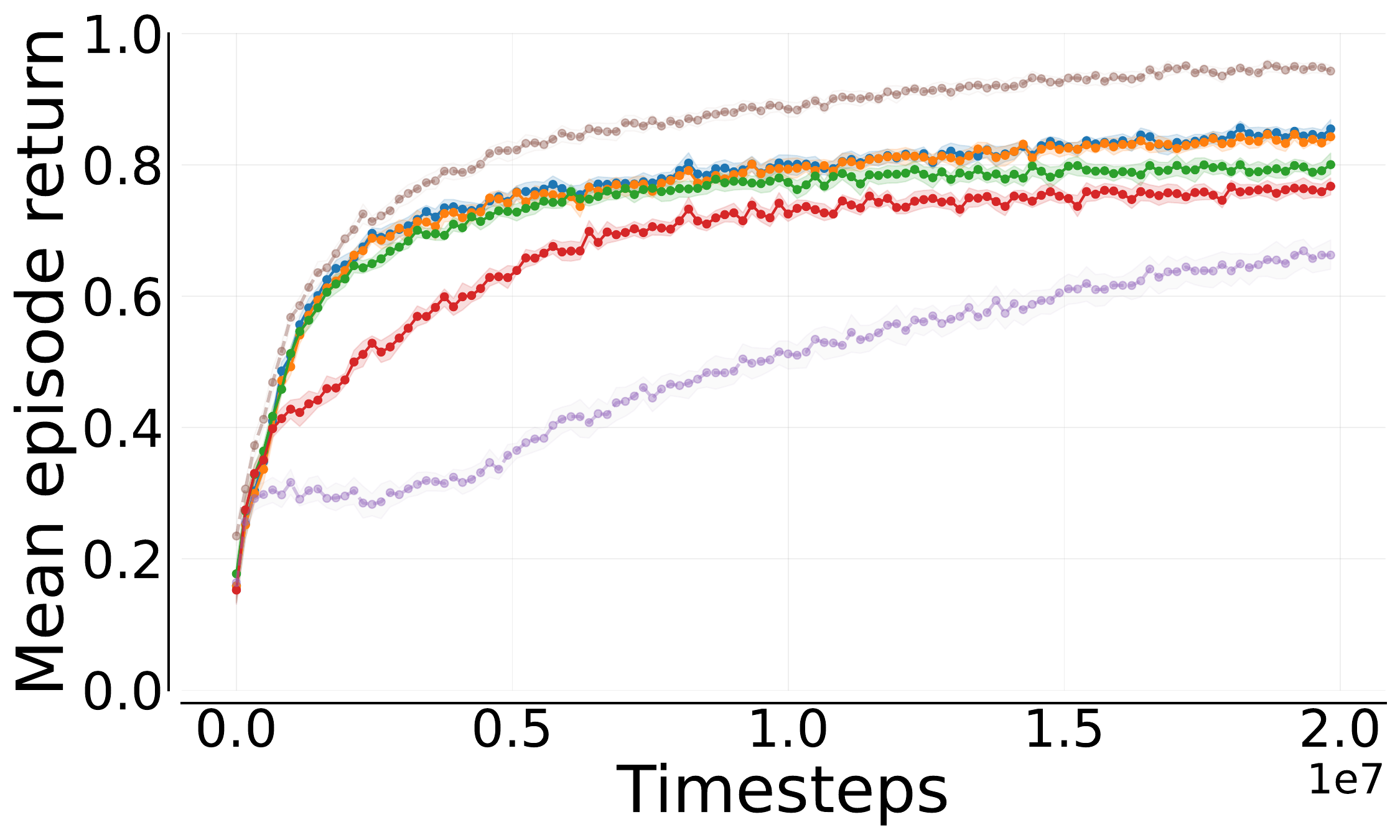}}
    \subfloat[RobotWarehouse]{
      \includegraphics[width=0.33\textwidth]{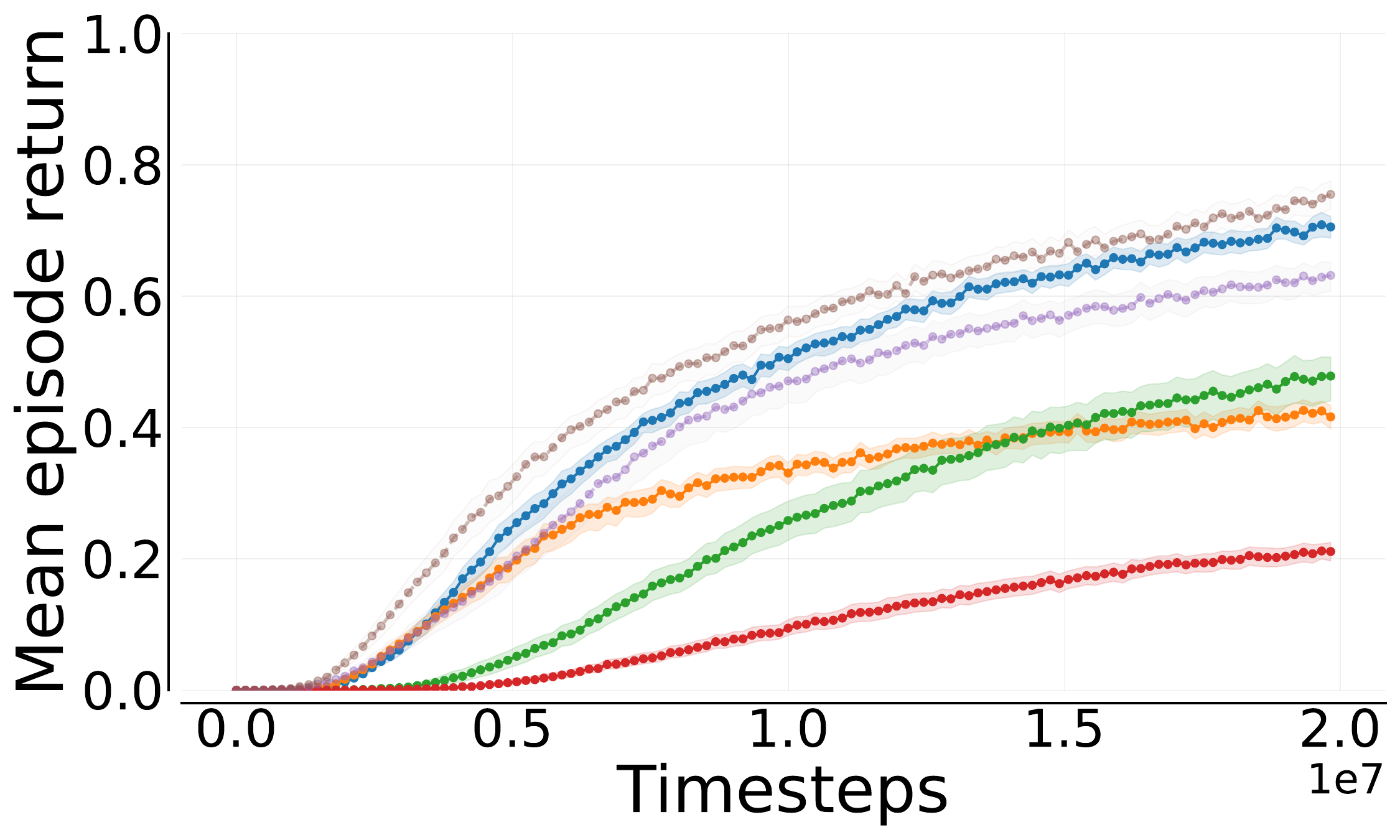}}
    \subfloat[Smax]{
      \includegraphics[width=0.33\textwidth]{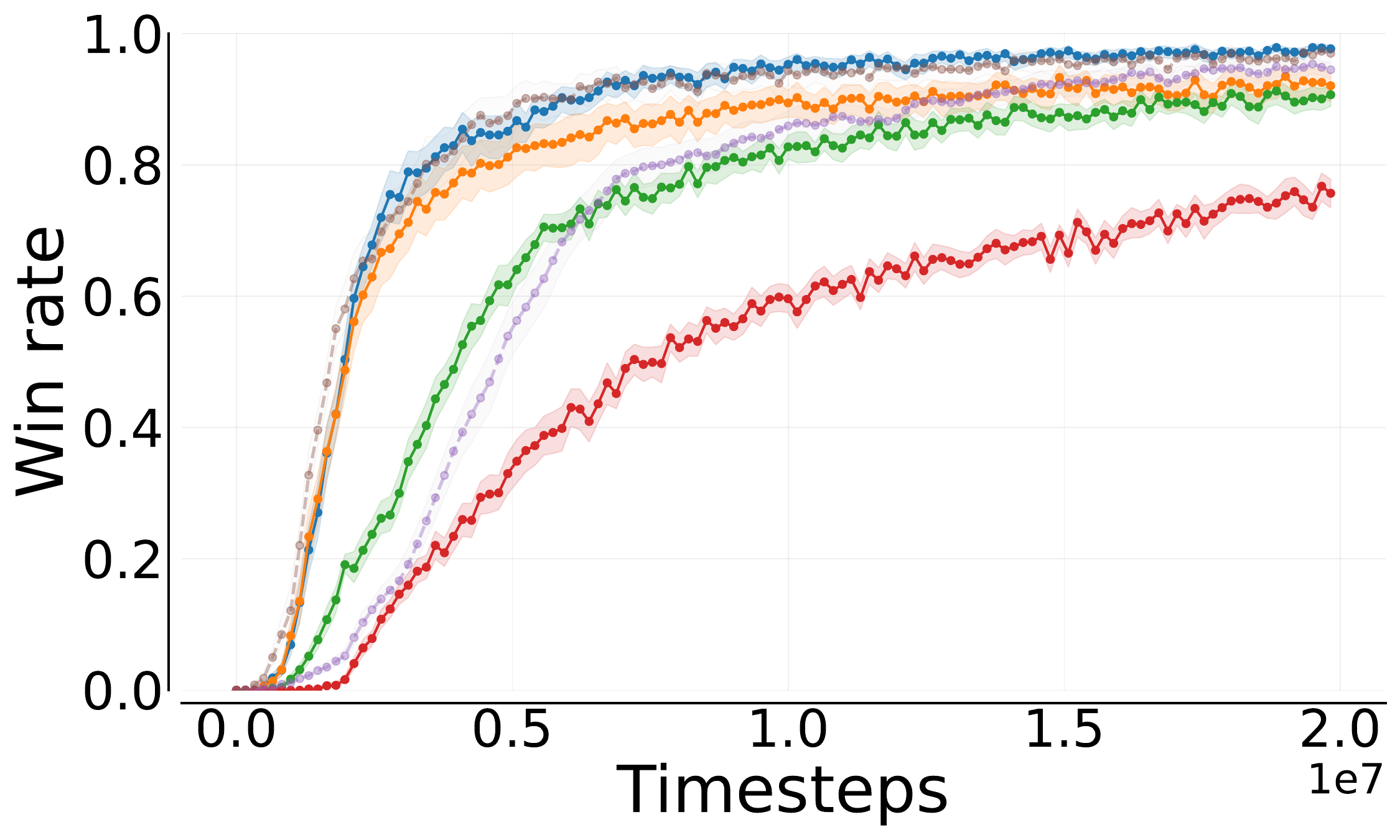}}
    \caption{The sample efficiency curves aggregated per environment suite, where dashed lines represent the CTCE methods. For each environment, results
 are aggregated over all tasks and the min–max normalized inter-quartile mean with 95\% stratified bootstrap confidence intervals are shown.}
    \label{fig:all}
\end{figure}

\begin{figure}[t]
    \centering
    \includegraphics[width=0.99\linewidth]{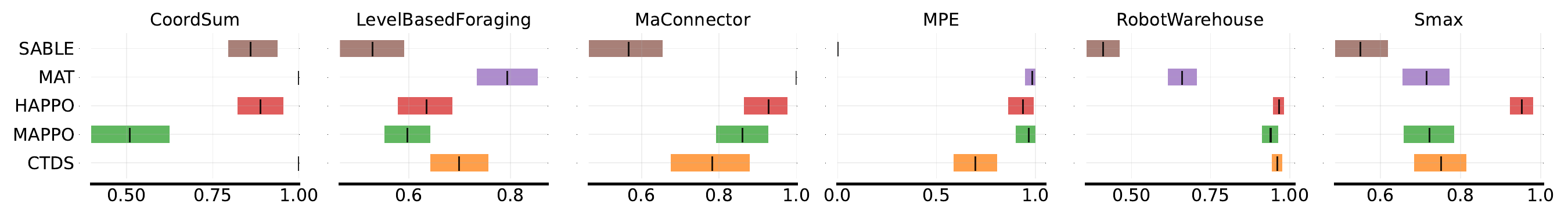}
    \caption{The overall aggregated probability of improvement for MAGPO compared to other baselines for that specific environment. A score of more than 0.5 where confidence intervals are also greater than 0.5 indicates statistically significant improvement over a baseline for a given environment \citep{agarwal2021deep}.}
    \label{fig:prob}
\end{figure}

\section{Experiments}

We evaluate MAGPO by comparing it against several SOTA baseline algorithms from the literature.
Specifically, we consider two CTCE methods—Sable \citep{mahjoub2025sable} and MAT \citep{MAT}—two CTDE baselines—MAPPO \citep{yu2022the} and HAPPO \citep{kuba2022trustregionpolicyoptimisation}—and a vanilla implementation of on-policy CTDS, which can be viewed as MAGPO without double clipping, masking, and the RL auxiliary loss.
For the joint policy in both MAGPO and CTDS, we use Sable as the default backbone.
All algorithms are implemented using the JAX-based MARL library Mava \citep{dekock2023mava}.

\paragraph{Evaluation \& Hyperparameters.}  
We follow the evaluation protocol from \citet{mahjoub2025sable}. 
Each algorithm is trained with 10 independent seeds per task. 
Training is conducted for 20 million environment steps, with 122 evenly spaced evaluation checkpoints. 
At each checkpoint, we record the task-specific metrics (e.g., mean episode return and win rate) over 32 evaluation episodes.  
For task-level results, we report the mean and 95\% confidence intervals. For aggregate performance across entire environment suites, we report the min-max normalized interquartile mean (IQM) with 95\% stratified bootstrap confidence intervals.
The hyperparameters are tuned on each task for each algorithm, which are detailed in Appendix~\ref{app:hyper}.

\paragraph{Environments.} 
We evaluate MAGPO on a diverse suite of JAX-based multi-agent benchmarks, including 4 tasks in CoordSum (introduced in this paper), 7 tasks in Level-based foraging (LBF) \citep{christianos2021sharedexperienceactorcriticmultiagent}, 4 tasks in Connector \citep{bonnet2024jumanjidiversesuitescalable}, 3 tasks in the Multi-agent Particle Environment (MPE) \citep{lowe2017multi}, 15 tasks in Robotic Warehouse (RWARE) \citep{papoudakis2020benchmarking}, and 10 tasks in The StarCraft Multi-Agent Challenge in JAX (SMAX) \citep{rutherford2024jaxmarlmultiagentrlenvironments}. 
The CoordSum environment, introduced in this paper, reflects the didactic examples discussed in Section~\ref{sec:3}, where agents must coordinate to output integers that sum to a given target without using fixed strategies. A detailed description is provided in Appendix~\ref{app:env}.

\subsection{Main Results}
Figure~\ref{fig:all} presents the per-environment aggregated sample-efficiency curves. Our results show that MAGPO achieves state-of-the-art performance across all CTDE methods and even outperforms CTCE methods on a subset of tasks.  
Specifically, MAGPO surpasses all CTDE baselines on 32 out of 43 tasks, and outperforms all baselines on 20 out of 43 tasks.
Figure~\ref{fig:prob} reports the probability of improvement of MAGPO over other baselines. MAGPO emerges as the most competitive CTDE method and performs comparably to the SOTA CTCE method Sable in three benchmark environments.
Comparing MAGPO to CTDS reveals a significant performance gap in the CoordSum and RWARE domains, suggesting that in these environments, the CTCE teacher may learn policies that are not decentralizable—rendering direct policy distillation ineffective.  
Additional tabular results and environment/task-level aggregation plots are provided in Appendix~\ref{app:table}.

\begin{figure}[t]
    \centering
    \subfloat[MAGPO with different CTCE methods as guider]{
      \includegraphics[width=0.49\textwidth]{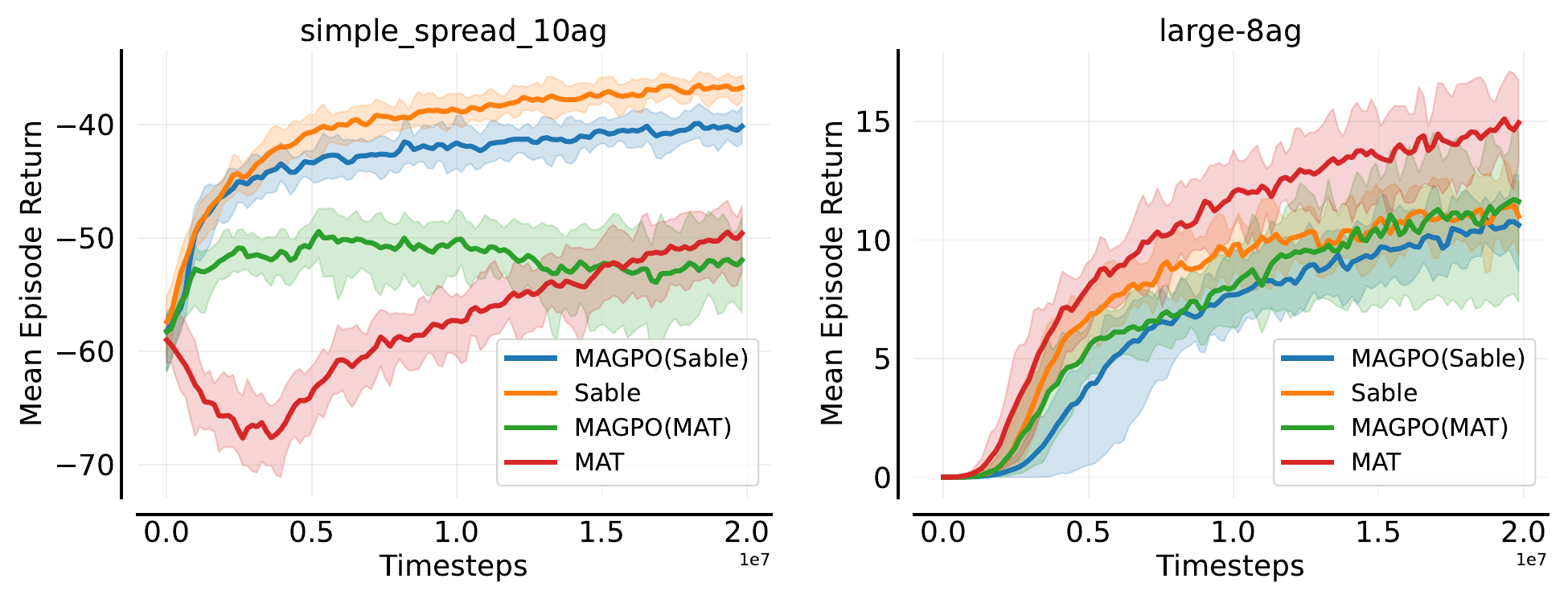}}
    \subfloat[MAGPO with different $\delta$]{
      \includegraphics[width=0.49\textwidth]{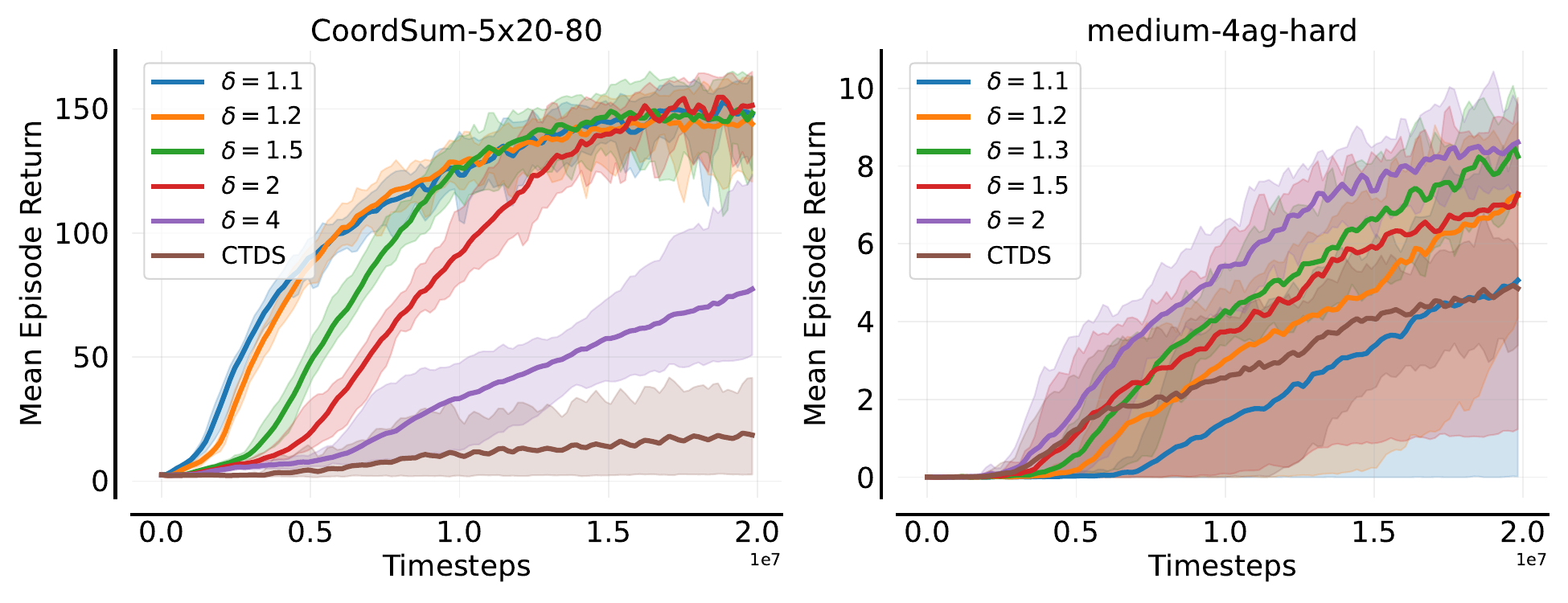}}
    \caption{ MAGPO performance varies with the choice of guider and the regularization ratio $\delta$.}
    \label{fig:abl1}
\end{figure}

\begin{figure}[t]
    \centering
    \subfloat[MAGPO with different $\lambda$]{
      \includegraphics[width=0.49\textwidth]{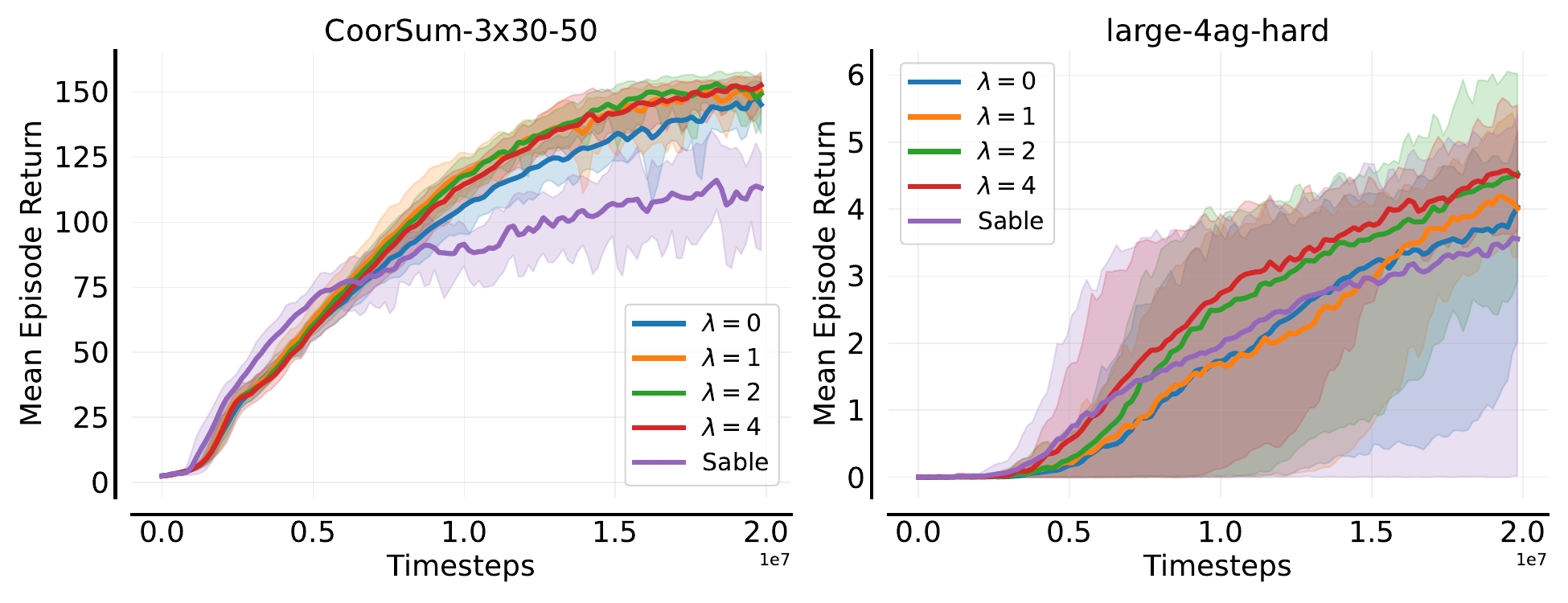}}
      \subfloat[CTDS with different auxiliary loss weighted by $\lambda$ ]{
      \includegraphics[width=0.49\textwidth]{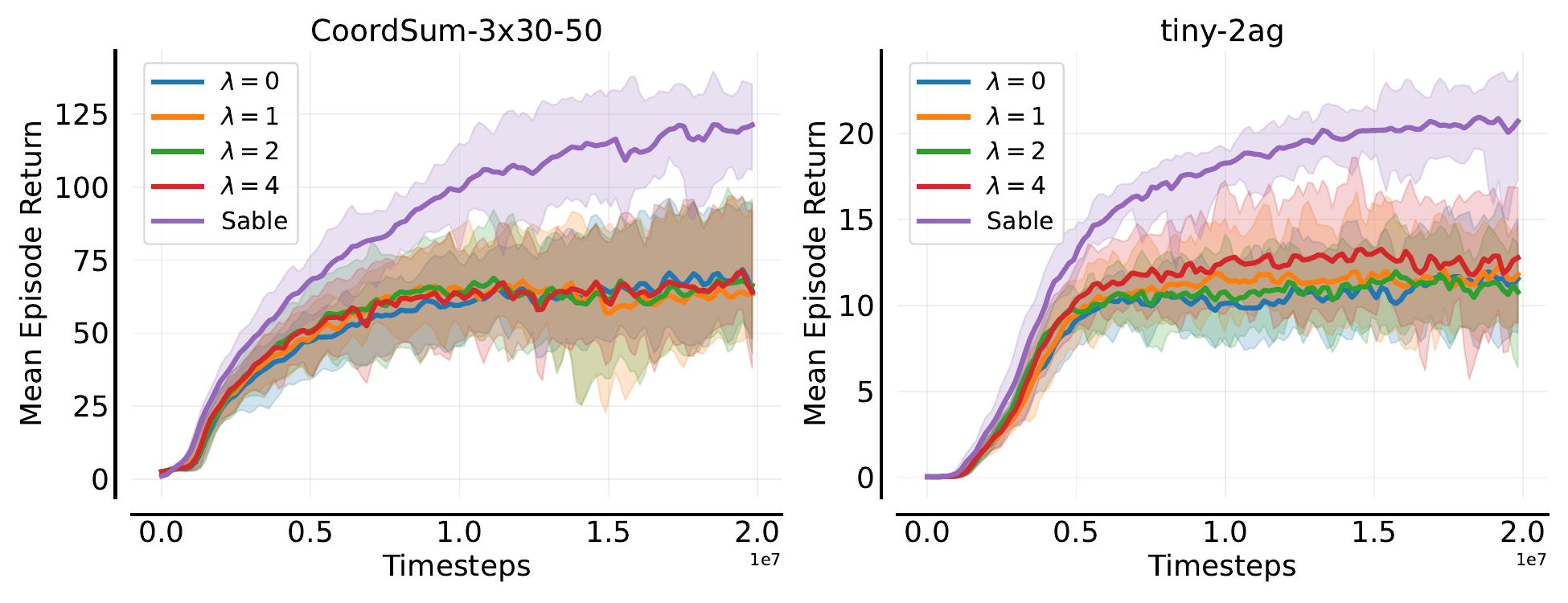}}
    \caption{The effect of RL auxiliary loss.}
    \label{fig:abl2}
\end{figure}

\subsection{Ablations and Discussions}
In this subsection, we discuss several key aspects and design choices of MAGPO.

\paragraph{Bridging CTCE and CTDE.}

MAGPO’s performance intuitively depends heavily on the capability of the guider, which corresponds to the performance of the underlying CTCE method. 
In Figure~\ref{fig:abl1}(a), we evaluate MAGPO on two tasks where Sable and MAT exhibit different performance. In \textit{simple\_spread\_10ag}, MAT performs significantly worse, resulting in poor performance of MAGPO when using MAT as the guider. In contrast, on \textit{large-8ag}, MAT outperforms Sable, leading to better performance of MAGPO with MAT.
This dependency could be seen as a limitation, but it actuall serves as a core feature: MAGPO effectively bridges CTCE and CTDE. In many practical applications that require decentralized policies, MAGPO enables advances in CTCE methods to directly benefit CTDE methods as well—facilitating the co-development of both paradigms.

\paragraph{Effect of the Ratio $\delta$.}
MAGPO introduces a hyperparameter $\delta$ to regulate the guider's deviation from the learner, which most strongly influences its performance. 
A smaller $\delta$ enforces a stricter constraint, keeping the guider closer to the learner policy; 
a larger $\delta$ allows the guider more freedom, potentially enabling it to explore regions of the policy space that are difficult or even unreachable under decentralized constraints.
In Figure~\ref{fig:abl1}(b), we assess MAGPO’s performance under varying $\delta$ values on two tasks. In \textit{CoordSum-5x20-80}, a smaller $\delta$ yields better performance because the centralized guider tends to learn a policy that is not decentralizable, which must be restricted to improve imitability. 
Conversely, in \textit{medium-4ag-hard}, the guider policy is more directly imitable, and restricting it too tightly hinders learning. 
These observations show the importance of tuning $\delta$ based on the task's structure and imitation feasibility.

\paragraph{Effect of RL Auxiliary Loss.}
MAGPO incorporates an RL auxiliary loss in the learner update to better utilize collected data and stabilize learning. 
This component is not as important as the $\delta$, but a properly tuned $\lambda$ can also improve performance, as shown in Figure~\ref{fig:abl2}(a).
To understand this, consider the guider's RL objective is towards an undecentralizable direction and the learner pulls it backward (due to the imitation constraint), then this back-and-forth may repeatedly stall progress. 
By incorporating RL updates, the learner can “counter-supervise” the guider, helping it discover more decentralizable update directions.
Furthermore, in Figure~\ref{fig:abl2}(b), we test applying the same RL auxiliary loss to a CTDS method. The results show limited benefit. This is because in CTDS, the behavioral policy is the teacher, which is not enforced to align with the student.
If the teacher-student gap is too large, the collected data is off-policy, thus
on-policy RL loss on the student provides little benefit.

\paragraph{Observation asymmetry.}
While most of our analysis has focused on asymmetries in the policy and action spaces, observation asymmetry is equally critical. 
In the current framework, CTCE methods like MAT and Sable condition on the union of agents' partial observations, whereas individual policies are limited to their own local views. 
This mismatch creates an imitation gap, making direct imitation methods like CTDS fail, even when the underlying joint policy is decentralizable. 
MAGPO addresses this issue similar to the single-agent setting~\citep{li2025guidedpolicyoptimizationpartial}, by controlling the divergence between the guider and the learner through the parameter $\delta$. 
In addition, privileged information—beyond the union of partial observations—is often available during centralized training (e.g., the true global state), although we do not explore it in this paper. 
Providing such privileged signals to the guider could further enhance its ability to supervise decentralized policies under partial observability.

\begin{wrapfigure}{r}{0.6\textwidth}
    \centering
    \includegraphics[width=0.98\linewidth]{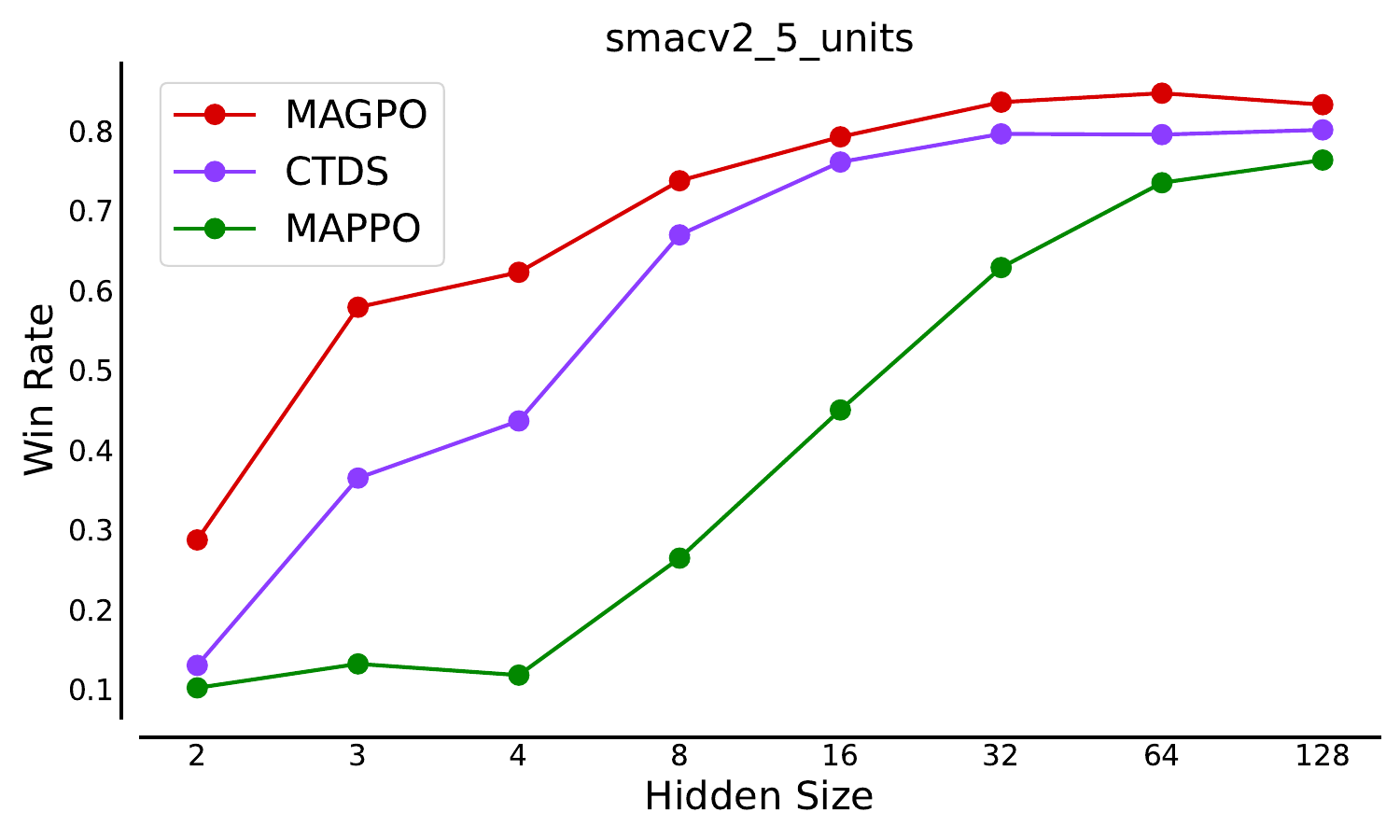}
    \caption{Effect of model capacity on performance.}
    \label{fig:distill}
\end{wrapfigure}

\paragraph{Model capacity.}
In addition to policy and observation asymmetry, asymmetry can also arise from mismatched \emph{model capacity} between training and deployment.
In many practical systems, a high-capacity centralized model or teacher is used during training, while a compact decentralized policy must be deployed due to compute or latency constraints (e.g., large LLMs distilled for real-time inference, or vision-language-action models paired with lightweight controllers for high-frequency control).
MAGPO naturally accommodates this setting by explicitly constraining the centralized guider to remain imitable by decentralized actors throughout training.
To evaluate this property, we conduct an additional experiment on \textit{smacv2\_5\_units}, where all training-time networks (e.g., critics and centralized policies) are kept at full capacity, while only the hidden dimension of the deployable decentralized actors is reduced at evaluation time.
This setup simulates a realistic distillation scenario in which a large teacher must be compressed for deployment.
As shown in Figure~\ref{fig:distill}, MAGPO consistently outperforms CTDS across all evaluated actor capacities, and the performance gap widens as the deployed actor becomes smaller.
Compared to MAPPO, both MAGPO and CTDS—being teacher–student frameworks—transfer knowledge from large models more effectively than value-based supervision alone.
Importantly, MAGPO degrades more gracefully under severe capacity constraints, indicating that constraining the teacher to remain aligned with decentralized realizability improves robustness to deployment compression.

\section{Conclusion}
We presented MAGPO, a novel framework that bridges the gap between CTCE and CTDE in cooperative MARL. MAGPO leverages a sequentially executed guider for coordinated exploration while constraining it to remain close to the decentralized learner policies. This design enables stable and effective guidance without sacrificing deployability.
Our approach builds upon the principles of GPO and introduces a practical training algorithm with provable monotonic improvement. Empirical results across 43 tasks in 6 diverse environments demonstrate that MAGPO consistently outperforms state-of-the-art CTDE methods and is competitive with CTCE methods, despite relying on decentralized execution.

\subsubsection*{Acknowledgments}
This work was supported in part by the National Natural Science Foundation of China [grant numbers U22A2062, U23B2037, 12272008, 62450001, 62476008].

\bibliography{iclr2026_conference}
\bibliographystyle{iclr2026_conference}

\appendix

\section*{The Use of Large Language Models (LLMs)}
LLMs are used to polish the paper writing.

\section{Proofs}\label{app:proof}

We will consider policy mirror descent (PMD) objective \citep{DBLP:journals/corr/abs-1909-02769}:
\begin{equation}
\mu^{(k+1)}=\underset{\mu}{\arg\max}\left\{\eta_k\langle\nabla V_\rho(\mu^{(k)}),\mu\rangle     -\frac{1}{1-\gamma}D_{d_\rho(\mu^{(k)})}(\mu,\mu^{(k)})\right\},
\end{equation}
where $\eta_k$ is the step size, $\rho\in\Delta(\mathcal{S})$ is an arbitrary state distribution and $d_\rho(\mu^{(k)})$ is the discounted state-visitation distribution under the policy $\mu^{(k)}$, $D_{d_\rho(\mu^{(k)})}$ is the weighted Bregman divergence.
Considering that 
\begin{equation}
\langle\nabla V_\rho(\mu^{(k)}),\mu\rangle=\sum_{s\in\mathcal{S}}\langle\nabla_s V_\rho(\mu^{(k)}),\mu(\cdot|s)\rangle,
\end{equation}
we obtain
\begin{equation}
\begin{aligned}
    \mu^{(k+1)}&=\underset{\mu}{\arg\max}\left\{\frac{1}{1-\gamma}\sum_{s\in\mathcal{S}}d_\rho(\mu^{(k)})(\eta_k \langle Q^{\mu^{(k)}}(s,\cdot),\mu(\cdot|s)\rangle -D(\mu(\cdot|s),\mu^{(k)}(\cdot|s)))\right\},\\
    &=\underset{\mu}{\arg\max}\left\{\sum_{s\in\mathcal{S}}(\eta_k \langle Q^{\mu^{(k)}}(s,\cdot),\mu(\cdot|s)\rangle -D(\mu(\cdot|s),\mu^{(k)}(\cdot|s)))\right\}.
\end{aligned}
\end{equation}
In this paper, we use KL divergence as a special case of Bregman divergence.

\begin{theorem}[Monotonic Improvement of MAGPO]
A sequence $(\bm{\pi}_k)_{k=0}^\infty$ of joint policies updated by the four step of MAGPO has the monotonic property:
\begin{equation}
    V_\rho(\bm{\pi}_{k+1})\ge V_\rho(\bm{\pi}_{k}), \ \forall k.
\end{equation}
\end{theorem}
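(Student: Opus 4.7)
The plan is to chain two monotonic improvement results—Policy Mirror Descent for the guider update, and the HARL-style sequential improvement theorem for the learner update—using the backtracking invariant as the glue.

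First, I would record the backtracking invariant $\bm{\mu}_k = \bm{\pi}_k$ established at the end of iteration $k-1$, which gives $V_\rho(\bm{\mu}_k) = V_\rho(\bm{\pi}_k)$ and $Q^{\bm{\mu}_k}(s,\bm{a}) = Q^{\bm{\pi}_k}(s,\bm{a})$. The proof then reduces to showing $V_\rho(\bm{\pi}_{k+1}) \geq V_\rho(\bm{\mu}_k)$. Next, I would solve the guider's PMD objective in closed form to obtain $\hat{\bm{\mu}}_k(\bm{a}|s) \propto \bm{\mu}_k(\bm{a}|s)\exp(\eta_k Q^{\bm{\mu}_k}(s,\bm{a}))$, and then apply Lemma 1 to rewrite the exponent as $\sum_j A^{i_j}_{\bm{\pi}_k}(s,\bm{a}^{i_{1:j-1}},a^{i_j})$. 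Absorbing conditional normalizers, this factors $\hat{\bm{\mu}}_k$ along the autoregressive ordering so that $\hat{\mu}_k^{i_j}(\cdot|s,\bm{a}^{i_{1:j-1}}) \propto \pi_k^{i_j}(\cdot|s)\exp(\eta_k A^{i_j}_{\bm{\pi}_k}(s,\bm{a}^{i_{1:j-1}},\cdot))$.

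Third, I would expand the learner's KL projection $\bm{\pi}_{k+1} = \arg\min_{\bm{\pi}} \text{D}_{\text{KL}}(\bm{\pi},\hat{\bm{\mu}}_k)$. Substituting the autoregressive factorization of $\hat{\bm{\mu}}_k$ and the product form of $\bm{\pi}$, the joint KL separates into a sum of per-agent terms in which each $\pi^{i_j}$ solves a local advantage-weighted problem regularized by $\text{D}_{\text{KL}}(\pi^{i_j},\pi_k^{i_j})$—precisely the sequential update stated in Corollary 1. Fourth, I would invoke the HARL-style monotonic improvement argument: using the performance difference lemma together with the advantage decomposition of Lemma 1, $V_\rho(\bm{\pi}_{k+1}) - V_\rho(\bm{\pi}_k)$ is lower-bounded (up to KL-controlled distribution-shift terms) by $\sum_j \mathbb{E}[A^{i_j}_{\bm{\pi}_k}(s,\bm{a}^{i_{1:j-1}},a^{i_j})]$. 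Each summand is non-negative since the sequential maximizer attains at least the value obtained by plugging in $\pi^{i_j} = \pi_k^{i_j}$, which yields objective value zero; choosing $\eta_k$ small enough keeps the KL remainder absorbed so the aggregate improvement stays non-negative.

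The main obstacle is the third step: the joint KL projection is carried out in parallel over all agents, while the target $\hat{\bm{\mu}}_k$ is autoregressive. Showing rigorously that this parallel projection is equivalent to the sequential HAPPO-style update requires the conditioning prefix $\bm{a}^{i_{1:j-1}}$ appearing in the target to align consistently with the marginals of $\bm{\pi}_{k+1}^{i_{1:j-1}}$ used in the expectation—an alignment that works precisely because the backtracking invariant makes $\bm{\mu}_k = \bm{\pi}_k$, so the prefix distribution agrees at initialization. Controlling the resulting state-visitation-shift terms, and verifying that the step size $\eta_k$ keeps the HARL-style bound non-negative across all agents simultaneously, is where I expect the proof to demand the most care.
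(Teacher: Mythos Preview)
Your plan routes through Corollary~1 and HARL-style bounds, but the paper's proof of the theorem is far more direct and uses neither Lemma~1 nor the sequential interpretation. The single idea you are missing is this: $\bm{\pi}_k$ is itself a feasible point for the learner's KL projection (any decentralized policy lies in the feasible set), so by optimality of $\bm{\pi}_{k+1}$,
\[
\text{D}_{\text{KL}}\big(\bm{\pi}_{k+1}(\cdot|s),\hat{\bm{\mu}}_k(\cdot|s)\big)\le \text{D}_{\text{KL}}\big(\bm{\pi}_k(\cdot|s),\hat{\bm{\mu}}_k(\cdot|s)\big)\quad\text{for every }s.
\]
Substituting the PMD closed form $\hat{\bm{\mu}}_k(\bm{a}|s)\propto\bm{\pi}_k(\bm{a}|s)\exp\big(\eta_kQ^{\bm{\pi}_k}(s,\bm{a})\big)$ into both sides and simplifying yields, \emph{per state},
\[
\eta_k\,\mathbb{E}_{\bm{a}\sim\bm{\pi}_{k+1}}\big[Q^{\bm{\pi}_k}(s,\bm{a})\big]-\eta_k\,\mathbb{E}_{\bm{a}\sim\bm{\pi}_k}\big[Q^{\bm{\pi}_k}(s,\bm{a})\big]\ge \text{D}_{\text{KL}}\big(\bm{\pi}_{k+1}(\cdot|s),\bm{\pi}_k(\cdot|s)\big)\ge 0.
\]
The performance difference lemma, taken with the state distribution $d_\rho^{\bm{\pi}_{k+1}}$, then gives $V_\rho(\bm{\pi}_{k+1})\ge V_\rho(\bm{\pi}_k)$ exactly. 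No advantage decomposition, no autoregressive factorization, no alignment of parallel and sequential projections is required.

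This also exposes two genuine problems with your proposal. First, your appeal to ``choosing $\eta_k$ small enough'' and ``KL-controlled distribution-shift terms'' is spurious: the theorem holds for every $\eta_k>0$, because the per-state advantage inequality above is exact and the performance difference lemma is an identity, not a trust-region bound. If your argument only delivers improvement for sufficiently small step size, you have proved a weaker statement than the one claimed. Second, the ``main obstacle'' you flag---reconciling the joint KL projection with the sequential per-agent updates and their prefix distributions---is precisely the content of Corollary~1, which in the paper is derived \emph{after} the theorem as a structural reinterpretation, not as a lemma feeding into it. By making the theorem depend on that reconciliation, you invert the logical order and import a difficulty the direct argument sidesteps entirely.
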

\begin{proof}
Following the derivation from \citet{PMD}, the PMD objective is
\begin{equation}
    \hat{\bm{\mu}}_k=\arg\max_{\bm{\mu}}\left\{\eta_k\langle Q^{\bm{\mu}_k}(s,\cdot),\bm{\mu}(\cdot|s)\rangle-\text{D}_{\text{KL}}(\bm{\mu}(\cdot|s),\bm{\mu}_k(\cdot|s))\right\},
\end{equation}
which admits the closed-form solution 
\begin{equation}\label{eq:muk}
\begin{aligned}
    \hat{\bm{\mu}}_k
&=\bm{\mu}_k(\bm{a}|s)\frac{\exp\left(\eta_kQ^{\bm{\mu}_k}(s,\bm{a})\right)}{z_k(s)}\\
&=\bm{\pi}_k(\bm{a}|s)\frac{\exp\left(\eta_kQ^{\bm{\pi}_k}(s,\bm{a})\right)}{z_k(s)}.
\end{aligned}
\end{equation}
where we replace $\bm{\mu}_k$ with $\bm{\pi}_k$ due to the backtracking step.

Next, the learner update is defined as
\begin{equation}
    \bm{\pi}_{k+1}(\cdot|s)=\arg\min_{\bm{\pi}}\text{D}_{\text{KL}}(\bm{\pi}(\cdot|s),\hat{\bm{\mu}}(\cdot|s)),
\end{equation}
which guarantees the KL divergence decreases:
\begin{align}
    \text{D}_{\text{KL}}(\bm{\pi}^k(\cdot|s),\hat{\bm{\mu}}(\cdot|s))&\ge 
    \text{D}_{\text{KL}}(\bm{\pi}^{k+1}(\cdot|s),\hat{\bm{\mu}}(\cdot|s))\\
    \mathbb{E}_{\bm{\pi}^k}\left[\log\bm{\pi}^k-\log\bm{\pi}^k-\eta_kQ^{\bm{\pi}^k}(s,\bm{a})\right]&\ge \mathbb{E}_{\bm{\pi}^{k+1}}\left[\log\bm{\pi}^{k+1}-\log\bm{\pi}^k-\eta_kQ^{\bm{\pi}^k}(s,\bm{a})\right]\\
    \eta_k\mathbb{E}_{\bm{\pi}^{k+1}}\left[Q^{\bm{\pi}^k}(s,\bm{a})\right]-\eta_k\mathbb{E}_{\bm{\pi}^{k}}\left[Q^{\bm{\pi}^k}(s,\bm{a})\right]&\ge
\text{D}_{\text{KL}}(\bm{\pi}^{k+1}(\cdot|s),\bm{\pi}^{k}(\cdot|s))
\end{align}
Then, by the performance difference lemma \citep{10.5555/645531.656005}, we obtain:
\begin{equation}
\begin{aligned}
  V_\rho(\bm{\pi}_{k+1})- V_\rho(\bm{\pi}_{k})&=\frac{1}{1-\gamma}\mathbb{E}_{s\sim d_\rho(\bm{\pi_{k+1}})}\left[\mathbb{E}_{\bm{\pi}^{k+1}}\left[Q^{\bm{\pi}^k}(s,\bm{a})\right]-\mathbb{E}_{\bm{\pi}^{k}}\left[Q^{\bm{\pi}^k}(s,\bm{a})\right]\right]\\
&\ge  \frac{1}{1-\gamma}\frac{1}{\eta_k}\mathbb{E}_{\bm{\pi}^{k+1}}\left[\text{D}_{\text{KL}}(\bm{\pi}^{k+1}(\cdot|s),\bm{\pi}^{k}(\cdot|s))\right]\\
&\ge  0.
\end{aligned}
\end{equation}
\end{proof}

\begin{corollary}[Sequential Update of MAGPO]
    The update of any individual policy $\pi^{i_j}$ with any ordered subset $i_{1:j}$ can be written as:
\begin{equation}
    \pi^{i_j}_{k+1}=\arg\max_{\pi^{i_j}}\mathbb{E}_{a^{i_{1:j-1}}\sim\bm{\pi}_{k+1}^{i_{1:j-1}},a^{i_j}\sim\pi^{i_j}}\left[
    A_{\bm{\pi}}^{i_{j}}(s,\bm{a}^{i_{1:j-1}},a^{i_j})\right]-\frac{1}{\eta_k}\text{D}_{\text{KL}}(\pi^{i_j},\pi_k^{i_j})
\end{equation}
\end{corollary}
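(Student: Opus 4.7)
The plan is to reduce the learner's joint KL projection onto the decentralized manifold to a per-agent update by invoking the Multi-Agent Advantage Decomposition together with a forward coordinate sweep; the crucial technical input is the identity $\mathbb{E}_{a^{i_l}\sim\pi_k^{i_l}}[A^{i_l}_{\bm{\pi}_k}]=0$ inherited from the definition of the per-subset $Q$ in the lemma.

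First, I would carry over from the proof of the theorem the equivalent form of the learner step,
\[
\bm{\pi}_{k+1}=\arg\max_{\bm{\pi}}\ \eta_k\,\mathbb{E}_{\bm{\pi}}\!\left[Q^{\bm{\pi}_k}(s,\bm{a})\right]-\text{D}_{\text{KL}}(\bm{\pi},\bm{\pi}_k),
\]
obtained by expanding $\text{D}_{\text{KL}}(\bm{\pi},\hat{\bm{\mu}}_k)$ using $\hat{\bm{\mu}}_k\propto\bm{\pi}_k\exp(\eta_k Q^{\bm{\pi}_k})$. Because both $\bm{\pi}$ and $\bm{\pi}_k$ are fully factorized decentralized policies, the KL decomposes cleanly as $\text{D}_{\text{KL}}(\bm{\pi},\bm{\pi}_k)=\sum_{j}\text{D}_{\text{KL}}(\pi^{i_j},\pi_k^{i_j})$. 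Applying the Multi-Agent Advantage Decomposition lemma along the ordering $i_{1:n}$ then yields
\[
\mathbb{E}_{\bm{\pi}}\!\left[Q^{\bm{\pi}_k}(s,\bm{a})\right]=V^{\bm{\pi}_k}(s)+\sum_{l=1}^{n}\mathbb{E}_{\bm{\pi}^{i_{1:l}}}\!\left[A^{i_l}_{\bm{\pi}_k}(s,\bm{a}^{i_{1:l-1}},a^{i_l})\right],
\]
where $V^{\bm{\pi}_k}(s)$ is constant in $\bm{\pi}$ and may be dropped from the argmax.

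Next, I would solve this joint objective by a forward coordinate sweep: when updating $\pi^{i_j}$, freeze $\pi^{i_{1:j-1}}=\bm{\pi}^{i_{1:j-1}}_{k+1}$ (already updated) and $\pi^{i_{j+1:n}}=\bm{\pi}^{i_{j+1:n}}_k$ (not yet updated), and isolate the $\pi^{i_j}$-dependent pieces. Summands with $l<j$ do not involve $\pi^{i_j}$; the $l=j$ summand is exactly $\mathbb{E}_{a^{i_{1:j-1}}\sim\bm{\pi}_{k+1}^{i_{1:j-1}},\,a^{i_j}\sim\pi^{i_j}}[A^{i_j}_{\bm{\pi}_k}]$, matching the corollary. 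For $l>j$, evaluating the innermost $\mathbb{E}_{a^{i_l}\sim\pi_k^{i_l}}$ first gives
\[
\mathbb{E}_{a^{i_l}\sim\pi_k^{i_l}}\!\left[A^{i_l}_{\bm{\pi}_k}(s,\bm{a}^{i_{1:l-1}},a^{i_l})\right]=Q^{i_{1:l-1}}(s,\bm{a}^{i_{1:l-1}})-Q^{i_{1:l-1}}(s,\bm{a}^{i_{1:l-1}})=0,
\]
so every $l>j$ contribution vanishes regardless of $\pi^{i_j}$. On the KL side, only $\text{D}_{\text{KL}}(\pi^{i_j},\pi_k^{i_j})$ depends on $\pi^{i_j}$. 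Collecting the surviving terms and absorbing $\eta_k$ into the KL weight reproduces the corollary statement.

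The main obstacle is justifying the forward-sweep reduction cleanly: it rests on the zero-mean identity above, which itself follows from $A^{i_l}_{\bm{\pi}_k}=Q^{i_{1:l}}_{\bm{\pi}_k}-Q^{i_{1:l-1}}_{\bm{\pi}_k}$ together with the marginalization property $\mathbb{E}_{a^{i_l}\sim\pi_k^{i_l}}[Q^{i_{1:l}}_{\bm{\pi}_k}]=Q^{i_{1:l-1}}_{\bm{\pi}_k}$ built into the definition of the per-subset $Q$ in the lemma. Once that single identity is in hand, the remainder of the argument is bookkeeping over indices in the advantage decomposition.
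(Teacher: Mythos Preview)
Your argument is correct and reaches the same conclusion as the paper, but by a genuinely different decomposition. The paper first factors the closed-form guider $\hat{\bm{\mu}}_k\propto\bm{\pi}_k\exp(\eta_k A^{\bm{\pi}_k})$ \emph{autoregressively} into per-agent conditionals $\hat{\mu}^{i_j}(a^{i_j}\mid s,\bm{a}^{i_{1:j-1}})\propto\pi_k^{i_j}\exp(\eta_k A^{i_j}_{\bm{\pi}_k})$, then decomposes $\text{D}_{\text{KL}}(\bm{\pi},\hat{\bm{\mu}}_k)$ via the KL chain rule and appeals (somewhat informally) to ``conditional independence of subsequent agents'' to justify optimizing one term at a time. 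You instead pass immediately to the equivalent projected mirror-descent objective $\arg\max_{\bm{\pi}}\eta_k\,\mathbb{E}_{\bm{\pi}}[Q^{\bm{\pi}_k}]-\text{D}_{\text{KL}}(\bm{\pi},\bm{\pi}_k)$, where the KL splits exactly because both arguments are product policies, and apply the advantage decomposition directly to $\mathbb{E}_{\bm{\pi}}[Q^{\bm{\pi}_k}]$. Your route has the advantage of making the forward sweep fully rigorous: the zero-mean identity $\mathbb{E}_{a^{i_l}\sim\pi_k^{i_l}}[A^{i_l}_{\bm{\pi}_k}(s,\bm{a}^{i_{1:l-1}},a^{i_l})]=0$ is exactly what kills the $l>j$ terms when later agents are still at $\pi_k^{i_l}$, and you avoid the action-dependent normalizers $z^j_k(s,\bm{a}^{i_{1:j-1}})$ that appear in the paper's factorization. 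The paper's route, on the other hand, buys an interpretable intermediate result---each autoregressive guider factor $\hat{\mu}^{i_j}$ is itself a per-agent PMD target tilted by the sequential advantage---which connects the guider structure to the learner update more directly.
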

\begin{proof}
We first decompose the guider policy in \eqref{eq:muk}
\begin{equation}
\begin{aligned}
    \hat{\bm{\mu}}_k&=\bm{\pi}_k(\bm{a}|s)\frac{\exp\left(\eta_kQ^{\bm{\pi}_k}(s,\bm{a})\right)}{z_k(s)}\\
    &=\bm{\pi}_k(\bm{a}|s)\exp\left(\eta_kQ^{\bm{\pi}_k}(s,\bm{a})-\eta_kV^{\bm{\pi}_k}(s)\right)\frac{\exp\left(\eta_kV^{\bm{\pi}_k}(s)\right)}{z_k(s)}\\
    &=\bm{\pi}_k(\bm{a}|s)\exp\left(\eta_kA^{\bm{\pi}_k}(s,\bm{a})\right)/\bar{z}_k(s)\\
    &=\left(\prod_{j=1}^n\pi^{i_j}_k(a^{i_j}|s)\right)\exp\left(\eta_k\sum_{j=1}^nA_{\bm{\pi}}^{i_{j}}(s,\bm{a}^{i_{1:j-1}},a^{i_j})\right)/\bar{z}_k(s)\\
    &=\prod_{j=1}^n\pi^{i_j}_k(a^{i_j}|s)\frac{\exp\left(\eta_kA_{\bm{\pi}}^{i_{j}}(s,\bm{a}^{i_{1:j-1}},a^{i_j})\right)}{z^i_k(s,\bm{a}^{i_{1:j-1}})}.
\end{aligned}
\end{equation}
This implies that the marginal guider policy for agent $i_j$ is:
\begin{equation}
    \hat{\mu}^{i_j}(a^{i_j}|s,\bm{a}^{i_{1:j-1}})=\pi^{i_j}_k(a^{i_j}|s)\frac{\exp\left(\eta_kA_{\bm{\pi}}^{i_{j}}(s,\bm{a}^{i_{1:j-1}},a^{i_j})\right)}{z^i_k(s,\bm{a}^{i_{1:j-1}})}.
\end{equation}
Next, we decompose the KL divergence:
\begin{equation}
\begin{aligned}
&\text{D}_{\text{KL}}(\bm{\pi}(\cdot|s),\hat{\bm{\mu}}(\cdot|s))=
\mathbb{E}_{\bm{a}\sim\bm{\pi}}\left[\log\bm{\pi}(\bm{a}|s)-\log\hat{\bm{\mu}}(\bm{a}|s)\right]\\
&=\mathbb{E}_{\bm{a}\sim\bm{\pi}}\left[\log\left(\prod_{j=1}^n\pi^{i_j}(a^{i_j}|s)\right)-\log\left(\prod_{j=1}^n\hat{\mu}^{i_j}(a^{i_j}|s,a^{i_{1:j-1}})\right)\right]\\
&=\mathbb{E}_{\bm{a}\sim\bm{\pi}}\left[\sum_{j=1}^n\log\pi^{i_j}(a^{i_j}|s)-\sum_{j=1}^n\log\hat{\mu}^{i_j}(a^{i_j}|s,a^{i_{1:j-1}})\right]\\
&=\sum_{j=1}^n\mathbb{E}_{\bm{a}^{i_{1:j-1}}\sim\bm{\pi}^{i_{1:j-1}},a^{i_j}\sim\pi^{i_j}}\left[\log\pi^{i_j}(a^{i_j}|s)-\log\hat{\mu}^{i_j}(a^{i_j}|s,a^{i_{1:j-1}})\right].
\end{aligned}
\end{equation}
Although the objective is not directly decoupled, we observe that each policy $\pi^{i_j}$ is conditionally independent of the subsequent agents given the prior ones. Therefore, we can sequentially optimize:
\begin{align*}
\pi^{i_1}_{k+1}&=\arg\min_{\pi^{i_1}}\mathbb{E}_{a^{i_1}\sim\pi^{i_1}}\left[\log\pi^{i_1}(a^{i_1}|s)-\log\hat{\mu}^{i_1}(a^{i_1}|s)\right]\\
\pi^{i_2}_{k+1}&=\arg\min_{\pi^{i_2}}\mathbb{E}_{a^{i_1}\sim\pi_{k+1}^{i_1},a^{i_2}\sim\pi^{i_2}}\left[\log\pi^{i_2}(a^{i_2}|s)-\log\hat{\mu}^{i_2}(a^{i_2}|s,a^{i_1})\right]\\
&......\\
\pi^{i_j}_{k+1}&=\arg\min_{\pi^{i_j}}\mathbb{E}_{a^{i_{1:j-1}}\sim\bm{\pi}_{k+1}^{i_{1:j-1}},a^{i_j}\sim\pi^{i_j}}\left[\log\pi^{i_j}(a^{i_j}|s)-\log\hat{\mu}^{i_j}(a^{i_j}|s,a^{i_{1:j-1}})\right]
\end{align*}
Substituting the expression for $\hat{\mu}^{i_j}$ yields:
\begin{equation}
\begin{aligned}
    \pi^{i_j}_{k+1}&=\arg\min_{\pi^{i_j}}\mathbb{E}_{a^{i_{1:j-1}}\sim\bm{\pi}_{k+1}^{i_{1:j-1}},a^{i_j}\sim\pi^{i_j}}\left[\log\pi^{i_j}(a^{i_j}|s)-\log\hat{\mu}^{i_j}(a^{i_j}|s,a^{i_{1:j-1}})\right]\\
    &=\arg\min_{\pi^{i_j}}\mathbb{E}_{a^{i_{1:j-1}}\sim\bm{\pi}_{k+1}^{i_{1:j-1}},a^{i_j}\sim\pi^{i_j}}\left[\log\pi^{i_j}(a^{i_j}|s)-\log\pi_k^{i_j}(a^{i_j}|s))-\eta_kA_{\bm{\pi}}^{i_{j}}(s,\bm{a}^{i_{1:j-1}},a^{i_j})\right]\\
    &=\arg\max_{\pi^{i_j}}\mathbb{E}_{a^{i_{1:j-1}}\sim\bm{\pi}_{k+1}^{i_{1:j-1}},a^{i_j}\sim\pi^{i_j}}\left[
    A_{\bm{\pi}}^{i_{j}}(s,\bm{a}^{i_{1:j-1}},a^{i_j})\right]-\frac{1}{\eta_k}\text{D}_{\text{KL}}(\pi^{i_j},\pi_k^{i_j}),
\end{aligned}
\end{equation}
which completes the proof.
\end{proof}

\section{CoordSum Details}\label{app:env}

We introduce the \textbf{CoordSum} environment, a cooperative multi-agent benchmark designed to demonstrate the flaw of CTDS and evaluate the performance of existing paradigm. 
In this environment, a team of agents is tasked with selecting individual integers such that their sum matches a shared target, while avoiding repeated patterns that can be exploited by an adversarial guesser.

\paragraph{Naming Convention}  
Each task in the CoordSum environment is denoted as:
\[
\texttt{CoordSum-<num\_agents>} \times \texttt{<num\_actions>}-\texttt{<max\_target>}
\]
where \texttt{<num\_agents>} is the number of agents in the team, \texttt{<num\_actions>} specifies the size of each agent’s discrete action space, and \texttt{<max\_target>} is the maximum possible target sum.

\paragraph{Observation Space}  
At each timestep $t \in [1, 100]$, all agents receive the same observation: the current target value $target[t] \sim U(0, \texttt{<max\_target>})$. The observation also includes the current step count.

\paragraph{Action Space}  
Each agent selects an integer action from a discrete set:
\[
\mathcal{A}_i = \{0, 1, \dots, \texttt{<num\_actions>} - 1\}
\]
for $i = 1, \dots, \texttt{<num\_agents>}$. The joint action is the vector of all agents’ selected integers.

\paragraph{Reward Function}  
To encourage agents to coordinate without relying on fixed or easily predictable strategies, the environment incorporates an opponent that attempts to guess the first agent’s action using a majority vote over historical data. Specifically, for each target value, the environment records the first agent’s past actions and uses the most frequent one as its guess.
The reward at each timestep is defined as follows:
\begin{itemize}
    \item If the sum of all agents’ actions equals the current target:
    \begin{itemize}
        \item A reward of \textbf{2.0} is given if the opponent’s guess does \emph{not} match the first agent’s action.
        \item A reward of \textbf{1.0} is given if the opponent’s guess \emph{does} match the first agent’s action.
    \end{itemize}
    \item If the sum does not match the target, a reward of \textbf{0.0} is given.
\end{itemize}
The same reward is distributed uniformly to all agents.

\section{Further experimental results}
\subsection{Per-task sample efficiency results}\label{app:all}
In Figure~\ref{fig:pertask}, we give all task-level aggregated results. In all cases, we report the mean with 95\% bootstrap confidence intervals over 10 independent runs.

\subsection{Per-task tabular results}\label{app:table}
In Table~\ref{tab:all}, we provide absolute episode metric \citep{colas2018geppgdecouplingexplorationexploitation, gorsane2022standardisedperformanceevaluationprotocol} over training averaged across 10 seeds with std reported. The bolded value means the best performance across all methods, while highlighted value represents the best among CTDE methods.

\begin{table}[ht]
\caption{Performance comparison across tasks. Best overall value is bolded. Best among CTDE methods are underlined.}
\label{tab:all}
\centering
\small
\begin{tabular}{
@{\hskip 1pt}l
@{\hskip 2pt}l
@{\hskip 1pt}|
@{\hskip 1pt}s
@{\hskip 1pt}s
@{\hskip 1pt}s
@{\hskip 1pt}s
@{\hskip 1pt}|
@{\hskip 1pt}s
@{\hskip 1pt}s
@{\hskip 1pt}}
\toprule
& Task & MAGPO & CTDS & HAPPO & MAPPO & MAT & SABLE \\ 
\toprule
\multirow{4}{*}{\scriptsize\rotatebox[origin=c]{90}{CoordSum}}
& 3x10-30 & $153.13 \pm 1.41$ & $111.98 \pm 11.15$ & $153.32 \pm 1.89$ & \underline{$\boldmath\mathbf{156.37 \pm 3.56}$} & $68.29 \pm 16.80$ & $153.70 \pm 3.77$ \\
 & 3x30-50 & $156.62 \pm 1.59$ & $76.27 \pm 14.10$ & $129.35 \pm 5.22$ & \underline{$\boldmath\mathbf{158.05 \pm 4.40}$} & $87.79 \pm 8.79$ & $125.04 \pm 7.18$ \\
 & 5x20-80 & \underline{$\boldmath\mathbf{157.61 \pm 3.89}$} & $19.62 \pm 11.79$ & $119.19 \pm 10.05$ & $142.51 \pm 4.87$ & $86.86 \pm 8.01$ & $48.35 \pm 15.76$ \\
 & 8x15-100 & \underline{$\boldmath\mathbf{129.94 \pm 8.47}$} & $23.96 \pm 15.92$ & $77.60 \pm 5.09$ & $127.32 \pm 12.76$ & $57.22 \pm 4.97$ & $28.91 \pm 18.75$ \\
\midrule
\multirow{7}{*}{\scriptsize\rotatebox[origin=c]{90}{LevelBasedForaging}} & 15x15-3p-5f & \underline{$\boldmath\mathbf{0.99 \pm 0.00}$} & $0.96 \pm 0.02$ & $0.91 \pm 0.02$ & $0.97 \pm 0.02$ & $0.91 \pm 0.02$ & $0.96 \pm 0.01$ \\
 & 15x15-4p-3f & \underline{$\boldmath\mathbf{1.00 \pm 0.00}$} & $1.00 \pm 0.00$ & $1.00 \pm 0.00$ & $1.00 \pm 0.00$ & $0.99 \pm 0.01$ & $1.00 \pm 0.00$ \\
 & 15x15-4p-5f & \underline{$\boldmath\mathbf{0.99 \pm 0.00}$} & $0.99 \pm 0.00$ & $0.89 \pm 0.02$ & $0.98 \pm 0.01$ & $0.97 \pm 0.01$ & $0.99 \pm 0.00$ \\
 & 2s-8x8-2p-2f-coop & $1.00 \pm 0.00$ & $1.00 \pm 0.00$ & $1.00 \pm 0.00$ & \underline{$\boldmath\mathbf{1.00 \pm 0.00}$} & $1.00 \pm 0.00$ & $\boldmath\mathbf{1.00 \pm 0.00}$ \\
 & 2s-10x10-3p-3f & $0.97 \pm 0.01$ & $0.87 \pm 0.02$ & $0.99 \pm 0.01$ & \underline{$\boldmath\mathbf{1.00 \pm 0.00}$} & $0.97 \pm 0.01$ & $0.99 \pm 0.00$ \\
 & 8x8-2p-2f-coop & \underline{$\boldmath\mathbf{1.00 \pm 0.00}$} & \underline{$\boldmath\mathbf{1.00 \pm 0.00}$} & \underline{$\boldmath\mathbf{1.00 \pm 0.00}$} & $1.00 \pm 0.00$ & $1.00 \pm 0.00$ & $1.00 \pm 0.00$ \\
 & 10x10-3p-3f & $1.00 \pm 0.00$ & $1.00 \pm 0.00$ & $1.00 \pm 0.00$ & \underline{$\boldmath\mathbf{1.00 \pm 0.00}$} & $0.99 \pm 0.00$ & $1.00 \pm 0.00$ \\
\midrule
\multirow{4}{*}{\scriptsize\rotatebox[origin=c]{90}{MaConnector}} & con-5x5x3a & \underline{$\boldmath\mathbf{0.94 \pm 0.01}$} & $0.93 \pm 0.02$ & $0.93 \pm 0.02$ & $0.87 \pm 0.02$ & $0.85 \pm 0.02$ & $0.92 \pm 0.02$ \\
 & con-7x7x5a & \underline{$\boldmath\mathbf{0.76 \pm 0.03}$} & $0.71 \pm 0.05$ & $0.67 \pm 0.02$ & $0.63 \pm 0.02$ & $0.62 \pm 0.04$ & $0.74 \pm 0.03$ \\
 & con-10x10x10a & \underline{$\boldmath\mathbf{0.42 \pm 0.03}$} & $0.37 \pm 0.04$ & $0.21 \pm 0.03$ & $0.30 \pm 0.01$ & $0.22 \pm 0.05$ & $0.39 \pm 0.03$ \\
 & con-15x15x23a & $0.02 \pm 0.01$ & $0.02 \pm 0.01$ & $0.00 \pm 0.00$ & \underline{$0.02 \pm 0.01$} & $0.00 \pm 0.00$ & $\boldmath\mathbf{0.08 \pm 0.01}$ \\
\midrule
\multirow{3}{*}{\scriptsize\rotatebox[origin=c]{90}{MPE}} & spread\_3ag & \underline{$-6.10 \pm 0.21$} & $-6.34 \pm 0.29$ & $-6.63 \pm 0.49$ & $-6.72 \pm 0.22$ & $-6.59 \pm 0.23$ & $\boldmath\mathbf{-4.92 \pm 0.30}$ \\
 & spread\_5ag & \underline{$-18.67 \pm 0.41$} & $-20.38 \pm 0.39$ & $-23.42 \pm 0.53$ & $-22.84 \pm 0.23$ & $-25.30 \pm 1.74$ & $\boldmath\mathbf{-12.75 \pm 0.91}$ \\
 & spread\_10ag & $-40.51 \pm 0.80$ & \underline{$-40.09 \pm 0.73$} & $-43.68 \pm 0.55$ & $-41.83 \pm 0.52$ & $-50.07 \pm 1.72$ & $\boldmath\mathbf{-36.93 \pm 0.32}$ \\
\midrule
\multirow{15}{*}{\scriptsize\rotatebox[origin=c]{90}{RobotWarehouse}} & large-4ag & \underline{$\boldmath\mathbf{7.63 \pm 0.98}$} & $5.00 \pm 0.54$ & $0.61 \pm 0.54$ & $3.02 \pm 2.26$ & $4.61 \pm 0.25$ & $6.22 \pm 1.73$ \\
 & large-4ag-hard & \underline{$\boldmath\mathbf{4.56 \pm 0.64}$} & $2.25 \pm 1.50$ & $0.00 \pm 0.00$ & $0.00 \pm 0.01$ & $2.28 \pm 1.88$ & $3.46 \pm 1.80$ \\
 & large-8ag & \underline{$10.40 \pm 0.52$} & $7.68 \pm 0.69$ & $0.00 \pm 0.00$ & $8.35 \pm 0.66$ & $\boldmath\mathbf{14.72 \pm 0.79}$ & $11.01 \pm 0.51$ \\
 & large-8ag-hard & \underline{$8.66 \pm 0.61$} & $4.32 \pm 2.86$ & $0.00 \pm 0.00$ & $3.38 \pm 3.10$ & $9.07 \pm 0.71$ & $\boldmath\mathbf{9.22 \pm 0.48}$ \\
 & medium-4ag & \underline{$11.46 \pm 1.16$} & $8.22 \pm 0.56$ & $4.04 \pm 0.63$ & $7.82 \pm 3.24$ & $7.62 \pm 3.83$ & $\boldmath\mathbf{12.74 \pm 1.44}$ \\
 & medium-4ag-hard & \underline{$\boldmath\mathbf{8.49 \pm 0.54}$} & $4.81 \pm 0.79$ & $3.75 \pm 0.63$ & $2.80 \pm 2.77$ & $4.64 \pm 2.54$ & $6.79 \pm 1.34$ \\
 & medium-6ag & \underline{$\boldmath\mathbf{14.78 \pm 3.28}$} & $8.49 \pm 1.07$ & $4.99 \pm 0.82$ & $12.13 \pm 0.53$ & $13.32 \pm 0.63$ & $12.97 \pm 1.03$ \\
 & small-4ag & \underline{$15.09 \pm 0.71$} & $11.07 \pm 0.81$ & $9.64 \pm 2.43$ & $10.52 \pm 0.75$ & $\boldmath\mathbf{18.27 \pm 0.53}$ & $16.47 \pm 8.26$ \\
 & small-4ag-hard & \underline{$11.48 \pm 0.41$} & $7.56 \pm 1.02$ & $6.50 \pm 1.09$ & $9.44 \pm 0.35$ & $9.68 \pm 3.19$ & $\boldmath\mathbf{12.02 \pm 1.20}$ \\
 & tiny-2ag & \underline{$19.70 \pm 1.16$} & $13.70 \pm 1.96$ & $10.93 \pm 2.30$ & $12.28 \pm 5.86$ & $17.06 \pm 1.61$ & $\boldmath\mathbf{21.17 \pm 1.24}$ \\
 & tiny-2ag-hard & \underline{$\boldmath\mathbf{16.61 \pm 0.99}$} & $9.23 \pm 0.88$ & $9.61 \pm 3.09$ & $13.60 \pm 0.73$ & $13.44 \pm 2.41$ & $15.93 \pm 0.74$ \\
 & tiny-4ag & \underline{$31.02 \pm 1.95$} & $14.42 \pm 1.16$ & $17.84 \pm 2.17$ & $26.29 \pm 2.88$ & $28.19 \pm 1.02$ & $\boldmath\mathbf{43.56 \pm 2.69}$ \\
 & tiny-4ag-hard & \underline{$20.49 \pm 2.61$} & $11.31 \pm 0.88$ & $16.66 \pm 2.50$ & $19.01 \pm 1.31$ & $20.54 \pm 11.99$ & $\boldmath\mathbf{30.97 \pm 1.65}$ \\
 & xlarge-4ag & \underline{$\boldmath\mathbf{5.74 \pm 0.71}$} & $3.02 \pm 1.25$ & $0.00 \pm 0.00$ & $3.73 \pm 1.19$ & $4.71 \pm 0.43$ & $3.76 \pm 2.30$ \\
 & xlarge-4ag-hard & \underline{$0.31 \pm 0.64$} & $0.12 \pm 0.34$ & $0.00 \pm 0.00$ & $0.00 \pm 0.00$ & $0.39 \pm 1.01$ & $\boldmath\mathbf{0.70 \pm 1.39}$ \\
\midrule
\multirow{10}{*}{\scriptsize\rotatebox[origin=c]{90}{Smax}} & 2s3z & \underline{$\boldmath\mathbf{1.00 \pm 0.00}$} & $0.99 \pm 0.01$ & $0.99 \pm 0.01$ & $1.00 \pm 0.00$ & $0.99 \pm 0.00$ & $1.00 \pm 0.00$ \\
 & 3s5z & $0.99 \pm 0.01$ & $0.98 \pm 0.01$ & $0.98 \pm 0.01$ & \underline{$1.00 \pm 0.00$} & $\boldmath\mathbf{1.00 \pm 0.00}$ & $1.00 \pm 0.01$ \\
 & 3s5z\_vs\_3s6z & \underline{$\boldmath\mathbf{0.95 \pm 0.02}$} & $0.89 \pm 0.04$ & $0.51 \pm 0.11$ & $0.91 \pm 0.05$ & $0.93 \pm 0.03$ & $0.94 \pm 0.02$ \\
 & 3s\_vs\_5z & $0.99 \pm 0.01$ & $0.97 \pm 0.01$ & $0.95 \pm 0.01$ & \underline{$\boldmath\mathbf{0.99 \pm 0.01}$} & $0.96 \pm 0.01$ & $0.99 \pm 0.01$ \\
 & 6h\_vs\_8z & \underline{$\boldmath\mathbf{1.00 \pm 0.00}$} & $0.99 \pm 0.01$ & $0.99 \pm 0.00$ & $1.00 \pm 0.00$ & $0.99 \pm 0.01$ & $1.00 \pm 0.00$ \\
 & 10m\_vs\_11m & \underline{$\boldmath\mathbf{0.98 \pm 0.02}$} & $0.80 \pm 0.21$ & $0.14 \pm 0.03$ & $0.39 \pm 0.07$ & $0.88 \pm 0.19$ & $0.83 \pm 0.24$ \\
 & 5m\_vs\_6m & \underline{$0.34 \pm 0.35$} & $0.15 \pm 0.23$ & $0.03 \pm 0.01$ & $0.28 \pm 0.37$ & $\boldmath\mathbf{0.68 \pm 0.36}$ & $0.59 \pm 0.41$ \\
 & smacv2\_5\_units & \underline{$\boldmath\mathbf{0.83 \pm 0.02}$} & $0.80 \pm 0.03$ & $0.68 \pm 0.02$ & $0.76 \pm 0.02$ & $0.82 \pm 0.02$ & $0.78 \pm 0.03$ \\
 & smacv2\_10\_units & $0.76 \pm 0.05$ & $0.65 \pm 0.06$ & $0.47 \pm 0.06$ & \underline{$\boldmath\mathbf{0.76 \pm 0.02}$} & $0.71 \pm 0.04$ & $0.65 \pm 0.04$ \\
 & 27m\_vs\_30m & $0.99 \pm 0.01$ & \underline{$0.99 \pm 0.01$} & $0.77 \pm 0.05$ & $0.83 \pm 0.10$ & $0.88 \pm 0.09$ & $\boldmath\mathbf{1.00 \pm 0.00}$ \\
\bottomrule
\end{tabular}

\end{table}

\subsection{Hyperparameters}\label{app:hyper}

All algorithms were tuned on each task with a tuning budget of 40 trials using the Tree-structured Parzen Estimator (TPE) implemented in the Optuna library \citep{akiba2019optuna}.
Since some of the tuned hyperparameters are provided in Mava \citep{dekock2023mava}, we directly adopt them and only tune the additional algorithms and tasks.
Specifically, MAGPO and HAPPO in all tasks, all algorithms in the newly introduced CoordSum tasks.

The default hyperparameters for all methods are listed in Table~\ref{tab:default1} and Table~\ref{tab:default2}. The full hyperparameter search spaces are provided in Table~\ref{tab:search1}, Table~\ref{tab:search2}, Table~\ref{tab:search3}, and Table~\ref{tab:search}.

\begin{figure}[!h]
    \centering
    \includegraphics[width=0.97\linewidth]{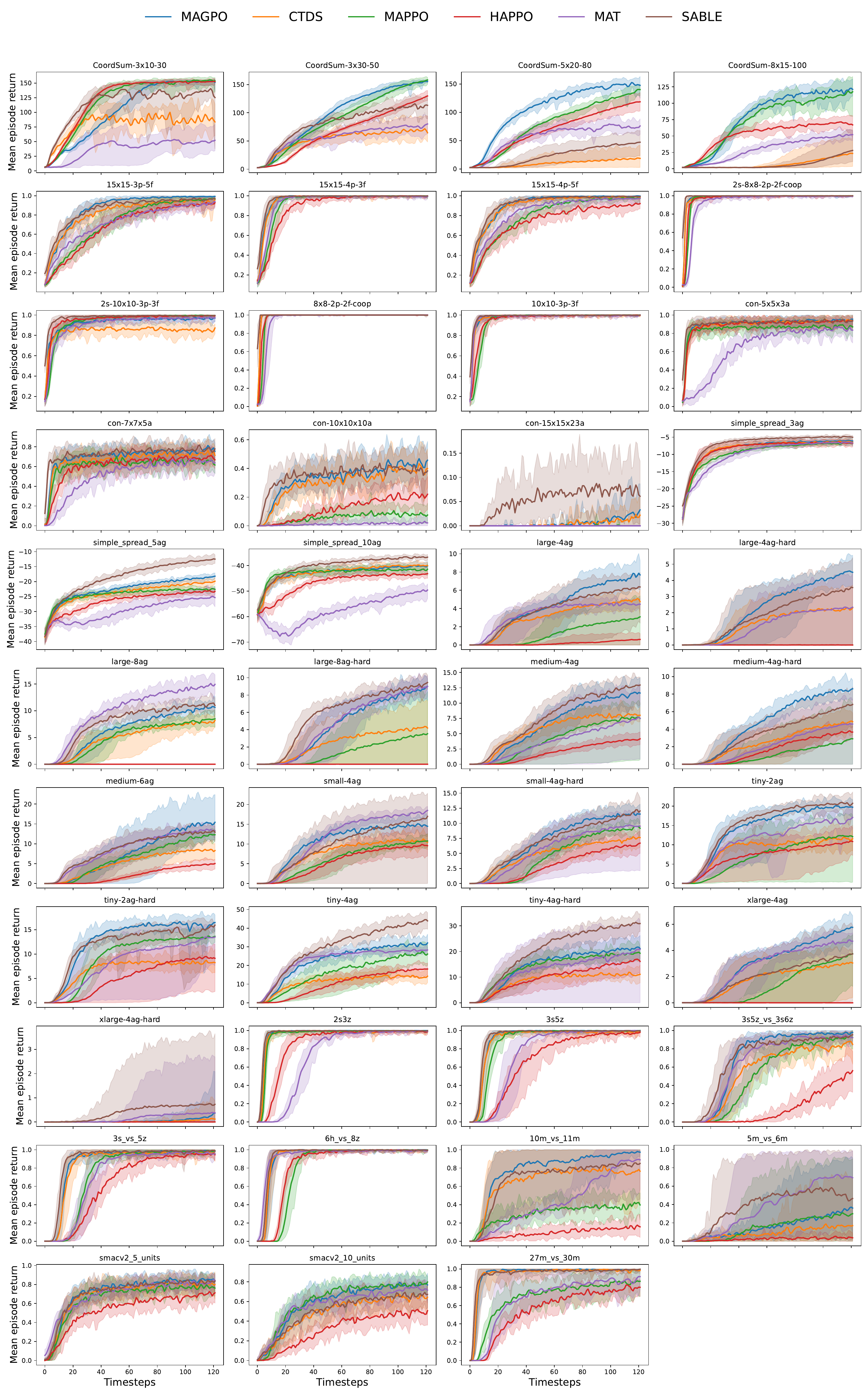}
    \caption{Mean episode return with 95\% bootstrap confidence intervals on all tasks.}
    \label{fig:pertask}
\end{figure}

\begin{table}[ht]
\centering
\caption{Default hyperparameters for MAT and Sable.}
\label{tab:default1}
\begin{tabular}{l|c}
\toprule
\textbf{Parameter} & \textbf{Value}  \\ 
\toprule
Activation function & GeLU \\
Normalize Advantage & True \\
Value function coefficient & 0.1 \\
Discount  & 0.99 \\
GAE & 0.9 \\
Rollout length  & 128 \\
Add one-hot agent ID  & True \\
\end{tabular}
\end{table}

\begin{table}[ht]
\centering
\caption{Default hyperparameters for HAPPO and MAPPO.}
\label{tab:default2}
\begin{tabular}{l|c}
\toprule
\textbf{Parameter} & \textbf{Value}  \\ 
\toprule
Value network layer sizes & [128,128] \\
Policy network layer sizes & [128,128] \\
Number of recurrent layers & 1 \\
Size of recurrent layer  & 128 \\
Activation function & ReLU \\
Normalize Advantage & True \\
Value function coefficient & 0.1 \\
Discount  & 0.99 \\
GAE & 0.9 \\
Rollout length  & 128 \\
Add one-hot agent ID  & True \\
\end{tabular}
\end{table}

\begin{table}[ht]
\centering
\caption{Hyperparameter Search Space for MAT.}
\label{tab:search1}
\begin{tabular}{l|c}
\toprule
\textbf{Parameter} & \textbf{Value}  \\ 
\toprule
PPO epochs & $\{  2,5,10,15  \}$ \\
Number of minibatches & $\{   1,2,4,8 \}$ \\
Entropy coefficient & $\{  0.1,0.01,0.001,1  \}$ \\
PPO clip $\epsilon$ & $\{  0.05,0.1,0.2  \}$ \\
Maximum gradient norm & $\{  0.5,5,10  \}$ \\
Learning rate & \{1e-3, 5e-4, 2.5e-4, 1e-4, 1e-5\} \\
Model embedding dimension & $\{   32,64,128 \}$ \\
Number of transformer heads & $\{ 1,2,4   \}$ \\
Number of transformer blocks & $\{ 1,2,3   \}$ \\
\end{tabular}
\end{table}

\begin{table}[ht]
\centering
\caption{Hyperparameter Search Space for Sable.}
\label{tab:search2}
\begin{tabular}{l|c}
\toprule
\textbf{Parameter} & \textbf{Value}  \\ 
\toprule
PPO epochs & $\{  2,5,10,15  \}$ \\
Number of minibatches & $\{   1,2,4,8 \}$ \\
Entropy coefficient & $\{  0.1,0.01,0.001,1  \}$ \\
PPO clip $\epsilon$ & $\{  0.05,0.1,0.2  \}$ \\
Maximum gradient norm & $\{  0.5,5,10  \}$ \\
Learning rate & \{1e-3, 5e-4, 2.5e-4, 1e-4, 1e-5\} \\
Model embedding dimension & $\{   32,64,128 \}$ \\
Number retention heads & $\{ 1,2,4   \}$ \\
Number retention blocks & $\{ 1,2,3   \}$ \\
Retention heads scaling parameter & $\{ 0.3,0.5,0.8,1  \}$ \\
\end{tabular}
\end{table}

\begin{table}[ht]
\centering
\caption{Hyperparameter Search Space for HAPPO and MAPPO.}
\label{tab:search3}
\begin{tabular}{l|c}
\toprule
\textbf{Parameter} & \textbf{Value}  \\ 
\toprule
PPO epochs & $\{  2,4,8  \}$ \\
Number of minibatches & $\{   2,4,8 \}$ \\
Entropy coefficient & $\{  0,0.01,0.00001 \}$ \\
PPO clip $\epsilon$ & $\{  0.05,0.1,0.2  \}$ \\
Maximum gradient norm & $\{  0.5,5,10  \}$ \\
Value Learning rate & \{1e-4, 2.5e-4, 5e-4\} \\
Policy Learning rate & \{1e-4, 2.5e-4, 5e-4\} \\
Recurrent chunk size & $\{  8, 16, 32, 64, 128 \}$ \\
\end{tabular}
\end{table}

\begin{table}[ht]
\centering
\caption{Hyperparameter Search Space for MAGPO.}
\label{tab:search}
\begin{tabular}{l|c}
\toprule
\textbf{Parameter} & \textbf{Value}  \\ 
\toprule
Double clip $\delta$ & $\{  1.1,1.2,1.3,1.5,2,3  \}$ \\
RL auxiliary loss $\lambda$ & $\{ 0,1,2,4,8 \}$ \\
\end{tabular}
\end{table}


\end{document}